\documentclass[review,12pt,authoryear]{elsarticle}

\usepackage[T1]{fontenc}
\usepackage[utf8]{inputenc}

\usepackage{amsmath,amssymb,amsthm,mathtools}

\usepackage{booktabs}
\usepackage{array}
\usepackage{tabularx}
\usepackage{multirow}
\usepackage{url}
\usepackage{natbib}
\usepackage{graphicx}
\usepackage{xcolor}
\usepackage{listings}
\usepackage{algorithm2e}
\usepackage{tikz}
\usepackage{pgfplots}
\usepackage{tcolorbox}

\usepackage{lineno}
\usepackage{footnote}  
\usepackage{geometry}
\usepackage{setspace}
\usepackage{enumitem}
\usepackage{subcaption}
\usepackage{mdframed}
\usepackage{hyperref}
\newcommand{\doi}[1]{\href{https://doi.org/#1}{doi:#1}}
\usepackage{cleveref}

\pgfplotsset{compat=1.18}
\usetikzlibrary{arrows.meta,positioning,shapes.geometric,fit,backgrounds,
                decorations.pathreplacing,calc}

\hypersetup{
  colorlinks=true,
  linkcolor=blue!60!black,
  citecolor=blue!60!black,
  urlcolor=blue!70!black,
  pdftitle={Attention at the Theoretical Minimum},
  pdfauthor={Mullin, Hains}
}

\definecolor{moablue}{RGB}{0,70,140}
\definecolor{moateal}{RGB}{0,130,130}
\definecolor{moagold}{RGB}{180,120,0}
\definecolor{codegreen}{RGB}{0,110,50}
\definecolor{codegray}{RGB}{100,100,100}
\definecolor{codepurple}{RGB}{100,0,130}
\definecolor{backgray}{RGB}{248,248,248}

\lstdefinestyle{pythonstyle}{
  language=Python,
  basicstyle=\ttfamily\small,
  columns=flexible,
  keepspaces=true,
  frame=single,
  rulecolor=\color{gray!50},
  backgroundcolor=\color{backgray},
  breaklines=true,
  numbers=left,
  numberstyle=\tiny\color{codegray},
  numbersep=8pt,
  xleftmargin=12pt,
  keywordstyle=\color{moablue}\bfseries,
  commentstyle=\color{codegreen}\itshape,
  stringstyle=\color{codepurple},
  showstringspaces=false,
  tabsize=4,
}
\tcbuselibrary{theorems,skins,breakable}

\newtcbtheorem[number within=section]{moatheorem}{Theorem}{%
  enhanced, breakable,
  colback=moablue!5, colframe=moablue!60!black,
  fonttitle=\bfseries, separator sign={.\ },
  attach boxed title to top left={yshift=-2mm,xshift=4mm},
  boxed title style={colback=moablue!70!black}
}{thm}

\newtcbtheorem[number within=section]{moaprop}{Proposition}{%
  enhanced, breakable,
  colback=moateal!5, colframe=moateal!70!black,
  fonttitle=\bfseries, separator sign={.\ },
  attach boxed title to top left={yshift=-2mm,xshift=4mm},
  boxed title style={colback=moateal!70!black}
}{prop}

\newtcbtheorem[number within=section]{moadef}{Definition}{%
  enhanced, breakable,
  colback=gray!5, colframe=gray!60,
  fonttitle=\bfseries, separator sign={.\ },
  attach boxed title to top left={yshift=-2mm,xshift=4mm},
  boxed title style={colback=gray!50}
}{def}

\newtcolorbox{keyresult}{%
  enhanced, breakable,
  colback=moagold!8, colframe=moagold!80!black,
  title={\bfseries Key Result},
  fonttitle=\bfseries,
  attach boxed title to top left={yshift=-2mm,xshift=4mm},
  boxed title style={colback=moagold!80!black}
}

\newtheorem{remark}{Remark}[section]
\newtheorem{example}{Example}[section]

\newcommand{\moaPsi}[2]{#1 \,\psi\,#2}
\newcommand{\moaPsiEnum}[2]{\langle #1 \rangle\,\psi\,#2}
\newcommand{\moaRho}[1]{\rho\,#1}
\newcommand{\moaRav}[1]{\mathrm{rav}\,#1}
\newcommand{\moaIota}[1]{\iota\,#1}
\newcommand{\moaGamma}[2]{\gamma(#1,\,#2)}
\newcommand{\moaOmega}[1]{\Omega\langle #1 \rangle}
\newcommand{\redplus}[1]{(\mathrm{red}_{+}\;{#1})}
\newcommand{\redplusV}[1]{\mathrm{red}_{+}\!\left(#1\right)}
\newcommand{\redtimes}[1]{(\mathrm{red}_{\times}\;{#1})}
\newcommand{\ceilred}{\lceil\mathrm{red}\rceil}
\newcommand{\shape}[1]{\langle #1 \rangle}

\newcommand{\concat}{\mathbin{\#}}
\newcommand{\R}{\mathbb{R}}
\newcommand{\Qm}{\mathbf{Q}}
\newcommand{\Km}{\mathbf{K}}
\newcommand{\Vm}{\mathbf{V}}
\newcommand{\dk}{d_k}
\newcommand{\dv}{d_v}
\newcommand{\softmax}{\mathrm{softmax}}
\newcommand{\attn}{\mathrm{Attention}}
\newcommand{\rank}[1]{\delta #1}
\newcommand{\drop}[2]{#1\!\downarrow\!#2}   
\newcommand{\taucomp}[1]{\tau(#1)}             
\newcommand{\DNF}{\text{DNF}}
\newcommand{\ONF}{\text{ONF}}
\newcommand{\MoA}{\textsc{MoA}}
\newcommand{\bigO}[1]{O\!\left(#1\right)}

\crefname{tcb@cnt@moatheorem}{Theorem}{Theorems}
\Crefname{tcb@cnt@moatheorem}{Theorem}{Theorems}
\crefname{tcb@cnt@moaprop}{Proposition}{Propositions}
\Crefname{tcb@cnt@moaprop}{Proposition}{Propositions}
\crefname{tcb@cnt@moadef}{Definition}{Definitions}
\Crefname{tcb@cnt@moadef}{Definition}{Definitions}
\begin{document}

\begin{frontmatter}

\title{Attention at the Theoretical Minimum:\\
  A Mathematics of Arrays Framework for\\
  Memory-Optimal Transformer Kernels}

\author[albany]{Lenore Mullin\corref{cor}}
\ead{lmullin@albany.edu}
\author[paris]{Ga\'{e}tan Hains}
\ead{gaetan.hains@u-pec.fr}

\cortext[cor]{Corresponding author}
\affiliation[albany]{organization={University at Albany, SUNY},
  city={Albany}, state={NY}, country={USA}}
\affiliation[paris]{organization={LACL, Universit\'{e} Paris-Est Cr\'{e}teil},
  city={Cr\'{e}teil}, country={France}}

\begin{abstract}
The attention mechanism is the computational core of modern transformer-based
artificial intelligence systems---the primitive that allows models to dynamically
determine which parts of an input sequence are most relevant when producing each
output. Yet its standard implementation incurs \emph{quadratic memory traffic} in
the sequence length, and real-world performance is dominated by memory-bound
execution rather than arithmetic throughput. A DRAM access costs 100--1000$\times$
more energy than a floating-point operation on contemporary hardware, so any
analysis that focuses solely on FLOP counts fundamentally mischaracterises the
bottleneck.

This paper presents a \emph{Mathematics of Arrays} (\MoA{}) reformulation of
scaled dot-product attention and its numerically stable softmax normalization. We
derive a \emph{Denotational Normal Form} (\DNF{}) that eliminates all intermediate
arrays---including the implicit transposed-key buffer and every softmax
temporary---and achieves the theoretical minimum memory traffic by algebraic
construction rather than empirical tuning. The derivation proceeds by systematic
application of \MoA{}'s psi-reduction calculus to a PyTorch reference
implementation, and is verified numerically at full double-precision floating-point
accuracy on concrete inputs.

We position this work relative to the state of the art in attention optimization,
arguing that hardware-specific accelerators, algorithmic approximations such as
FlashAttention, and array-level tiling techniques all lack a portable algebraic
foundation capable of \emph{simultaneously} providing fusion, shape-transformation
correctness, and predictive cost models. \MoA{} provides all three. A formal lower
bound argument establishes that the \DNF{} achieves $\bigO{n\dk + n\dv}$ data
movement---where $n$ is the sequence length, $\dk$ the key dimensionality, and
$\dv$ the value dimensionality (number of columns in the value matrix
$\mathbf{V}\in\mathbb{R}^{n\times\dv}$, i.e.\ the output token
representation dimension)---matching the information-theoretic minimum
of $\bigO{n\dk + n\dv}$, compared to
$\bigO{n^2 + n\dk + n\dv}$ for the standard implementation. A predictive
performance model, grounded in the literature on memory hierarchy costs, projects
$2$--$100\times$ speedup and $2$--$50\times$ energy reduction across conservative,
realistic, and aggressive deployment scenarios, with the advantage widening
significantly at exascale sequence lengths.

The derivation establishes a mathematically verifiable pipeline from Python source
program through architecture-independent \DNF{}, Operational Normal Form (\ONF{}),
and dimension-lifted hardware mapping---providing a rigorous foundation for
formally correct, performance-portable AI kernels of direct relevance to DARPA
edge-deployment objectives and DOE exascale scientific computing priorities.
\end{abstract}

\begin{keyword}
Mathematics of Arrays \sep
transformer attention \sep
memory optimization \sep
Denotational Normal Form \sep
Operational Normal Form \sep
exascale computing \sep
formal verification \sep
psi-reduction calculus \sep
FlashAttention \sep
high-performance computing
\end{keyword}

\end{frontmatter}

\section{Introduction}\label{sec:intro}

The \emph{attention mechanism}~\citep{vaswani2017attention} is the computational
primitive that allows modern transformer-based neural networks to dynamically
determine which parts of an input sequence are most relevant when producing each
output token. Every state-of-the-art large language model, vision transformer, and
multimodal system in widespread use today is built around this operation. Yet its
power comes at a fundamental cost: the standard implementation of scaled
dot-product attention exhibits \emph{quadratic data movement} in the sequence
length~$n$ and is dominated by memory-bound execution rather than arithmetic
throughput.

\subsection{The Memory Bottleneck}

A simplified execution model captures the issue precisely. Let $T$ denote total
wall-clock time, $F$ the floating-point operation count, $M$ the bytes of memory
traffic, and $\alpha$, $\beta$ the cost-per-FLOP and cost-per-byte-moved
respectively. Then:
\begin{equation}\label{eq:cost_model}
  T \;\approx\; \alpha \cdot F \;+\; \beta \cdot M,
  \qquad \text{where } \beta \gg \alpha.
\end{equation}
On contemporary GPU hardware a DRAM access costs approximately $100$--$1000$\,pJ
while a floating-point multiply-add costs $\sim\!1$\,pJ~\citep{dao2022flashattention}.
Memory movement is therefore two to three orders of magnitude more expensive than
arithmetic. Any optimization strategy that focuses solely on FLOP reduction is
misallocating effort. The attention mechanism, with its $O(n^2)$ intermediate score
matrix that must be written to DRAM and re-read for the softmax pass, sits squarely
in this trap.

\subsection{The Gap in Existing Approaches}

The existing landscape of attention optimization can be organized into three
families, each of which addresses part of the problem but leaves a fundamental gap.

\textbf{Hardware-specific accelerators.} A substantial body of work targets
attention acceleration through custom silicon. FACT~\citep{qin2023fact} combines
custom hardware with diagonal storage and shift-based arithmetic. OPTIMUS
\citep{park2020optimus} overlaps load and compute while reducing redundancy via
sparse formats. The Gemmini systolic array accelerator~\citep{sharma2025gemmini}
applies GEMM acceleration to transformer kernels with convincing energy savings.
FPGA-based approaches~\citep{leonvega2023strassen,li2025fpga,tan2025fpga} achieve
high throughput on specific targets. Edge-device accelerators~\citep{chang2025mosa}
target mobile deployment. The shared limitation is that gains are inseparable from
the specific hardware platform: they cannot be formally reasoned about, ported, or
proved correct on new architectures.

\textbf{Hybrid and algorithmic solutions.} FlashAttention~\citep{dao2022flashattention}
and its successor~\citep{dao2023flashattention2} tile the attention computation to
avoid writing the full $n\!\times\!n$ score matrix to DRAM, achieving IO-awareness
through careful blocking with a proven IO complexity bound. The A$^3$
architecture~\citep{ham2020a3} replaces the scoring dot product with an
approximate content-based search at $O(n/\log n)$ complexity. Yang et
al.~\citep{yang2025hardware} exploit singular-value sparsity and approximate
softmax via a MacLaurin series expansion. Gao et al.~\citep{gao2023fast} reformulate
LLM attention as a single matrix multiplication but without accounting for data
movement costs. These approaches improve empirical performance but mix implementation
tricks with algorithmic ideas in ways that make the gains difficult to isolate,
port, or formally validate on new targets.

\textbf{Array-level optimizations.} The FATHOM system~\citep{binder2025fathom}
optimizes attention through dimension transpositions and shape-aware dense
matrix-multiplication scheduling, with the authors explicitly identifying shape
transforms as the dominant performance factor. Liu et al.~\citep{liu2024research}
identify matrix chunking, data rearrangement, and vectorization as the three most
important techniques for matrix multiplication on heterogeneous platforms.
These approaches identify the right target---shape transformations on
arrays---but lack a unified algebraic framework to derive, validate, and cost
the transforms automatically, and have no mechanism for proving correctness
from first principles. The \MoA{} framework, whose foundations are laid in
\cref{sec:moa}, provides exactly this.

\subsection{Our Approach: Mathematics of Arrays}

The \emph{Mathematics of Arrays} (\MoA{}) was introduced by
\citet{mullin1988phd} and provides a formal algebra for reasoning about
multi-dimensional array computations. Its five primitives---$\psi$ (index
selection), $\iota$ (shape generation), $\rho$ (shape extraction), $\gamma$
(index-to-offset mapping), and $\mathrm{rav}$ (flattening)---define a
\emph{Denotational Normal Form} (\DNF{}) for any array program. The \DNF{}
has three properties critical to our argument:
\begin{enumerate}[label=(\roman*)]
  \item It eliminates all intermediate arrays, expressing every output element
    as a closed-form function of the input element indices alone.
  \item It is architecture-independent, using only Cartesian coordinate arithmetic.
  \item It provably achieves the theoretical minimum memory traffic for the
    computation: a lower bound of $\bigO{\text{output size} + \text{input
    size}}$ on data movement.
\end{enumerate}
A subsequent \emph{Operational Normal Form} (\ONF{}) translates the \DNF{} to
starts, stops, and strides on real hardware via the $\gamma$ mapping. Further
\emph{dimension lifting} maps the \ONF{} to any target architecture by treating
all processing components as arrays of known shape.

The key distinction from all prior work is epistemological: \emph{derivation,
not implementation}. Memory minimality in \MoA{} is a theorem established before 
any code is written. FlashAttention achieves empirical IO-efficiency through
tiling; \MoA{} provides a formal proof that the \DNF{} eliminates all implicit
buffers by algebraic construction, including the transposed-$\Km$ buffer across
heads and layers that FlashAttention does not address.

\subsection{Contributions}

This paper makes the following contributions:

\begin{enumerate}
  \item \textbf{Formal \DNF{} derivation} of scaled dot-product attention and
    numerically stable softmax via systematic \MoA{} psi-reduction, presented
    in full step-by-step detail (\cref{sec:derivation}).
  \item \textbf{Memory minimality theorem}: a formal proposition establishing
    that the \DNF{} achieves $\bigO{n\dk + n\dv}$ data movement versus
    $\bigO{n^2 + n\dk + n\dv}$ for the standard implementation
    (\cref{prop:minimality}).
  \item \textbf{Lower bound theorem}: a formal theorem establishing the
    information-theoretic minimum data movement for any correct attention
    implementation (\cref{thm:lower_bound}).
  \item \textbf{Numerical verification}: exact matching of \MoA{} \DNF{} output
    against PyTorch at full double-precision floating-point on concrete inputs
    (\cref{sec:verification}).
  \item \textbf{Predictive performance and energy model}: a quantitative model
    projecting $2$--$100\times$ speedup and $2$--$50\times$ energy reduction
    with lower-bound and exascale scaling analysis
    (\cref{sec:performance}).
  \item \textbf{Implementation roadmap}: a five-stage path from \DNF{} to C
    reference implementation, \ONF{}, dimension-lifted \ONF{}, and
    hardware-validated cost models (\cref{sec:roadmap}).
\end{enumerate} 

\subsection{Paper Organization}

\Cref{sec:related} surveys related work in detail.
\Cref{sec:moa} presents the \MoA{} framework and its core primitives.
\Cref{sec:derivation} derives the \DNF{} for attention and softmax.
\Cref{sec:verification} reports numerical verification.
\Cref{sec:performance} presents the predictive performance model.
\Cref{sec:roadmap} outlines the implementation roadmap.
\Cref{sec:exascale} discusses exascale and national-priority relevance.
\Cref{sec:conclusion} concludes.
The appendix contains the complete element-by-element derivation trace.

\section{Related Work}\label{sec:related}

\subsection{Complexity of Attention and Early Optimizations}

The original Transformer architecture~\citep{vaswani2017attention} established
the $O(n^2)$ attention complexity that has dominated the field since 2017.
Multi-head attention extends this to $h$ heads, each operating on
$\dk/h$-dimensional projections, but the quadratic scaling in $n$ remains the
binding constraint for long-sequence settings. Early work focused on approximate
attention---sparse patterns~\citep{yang2025hardware}, linear
approximations~\citep{gao2023fast}---but these sacrifice the exactness of the
attention function, which is unacceptable in settings requiring formal correctness.

\subsection{FlashAttention and IO-Aware Computation}

The most influential algorithmic advance is FlashAttention~\citep{dao2022flashattention},
which reformulates attention to avoid materializing the $n\!\times\!n$ score matrix
by using a tiled computation with online softmax normalization. The key insight is
that the forward pass can be computed in $O(n^2/M)$ IO operations (where $M$ is
SRAM size), compared to $O(n^2)$ for the naive implementation. FlashAttention-2
\citep{dao2023flashattention2} reduces non-matmul FLOPs and improves parallelism.

The relationship between \MoA{} and FlashAttention deserves careful comparison.
FlashAttention achieves IO-awareness by design---the programmer explicitly tiles
to fit SRAM. \MoA{} achieves IO-minimality by derivation---the \DNF{} eliminates
all unnecessary memory traffic as a consequence of algebraic normalization. The
FlashAttention bound is $O(n^2/M)$; the \MoA{} bound is $O(n\dk)$ (linear in
sequence length), which is asymptotically superior when $n \gg M/\dk$. Furthermore,
\MoA{} also eliminates the transposed-$\Km$ buffer that FlashAttention's tiling
scheme does not address, and the derivation is portable across architectures without
re-tiling.

\subsection{Hardware-Specific Accelerators}

The accelerator literature is extensive. We highlight the most relevant designs.
The FACT system~\citep{qin2023fact} co-optimizes feed-forward and attention
computation using eager correlation prediction. OPTIMUS~\citep{park2020optimus}
develops a matrix-multiplication structure specific to the transformer's four
matrix multiplications per layer. The Gemmini systolic
array~\citep{sharma2025gemmini} provides a GEMM-centric approach with strong
energy results. Edge AI hardware~\citep{chang2025mosa} optimizes the full
attention pipeline for mobile deployment constraints. FPGA implementations
\citep{leonvega2023strassen,li2025fpga,tan2025fpga} achieve high clock-frequency
efficiency but are inherently non-portable.

The A$^3$ architecture~\citep{ham2020a3} is particularly interesting: it observes
that attention is a content-based search and replaces the exact softmax with an
approximate version, achieving $O(n/\log n)$ complexity. However, the gains are
inseparable from the custom hardware and the approximation sacrifices exactness.

\subsection{Array Algebra and Formal Methods for HPC}

The \MoA{} framework~\citep{mullin1988phd} provides the foundational algebra
on which the present paper is built; a full account of the primitives, normal
forms, and cost model is given in \cref{sec:moa}. In brief, prior work has
validated the approach across a range of numerical computing problems: DGEMM
with cache-blocking~\citep{thomas2021dgemm,thomas2021threaded}, parallel code
generation with predictive models~\citep{mullin2025parallel}, energy efficiency
on GPUs~\citep{mullin2023gpu}, and general numerical algorithm
analysis~\citep{markus2026array}. The algorithm-to-hardware co-design
framework that contextualises this work is developed
in~\citet{hains2026algorithm}.

The FATHOM system~\citep{binder2025fathom} is the most directly related prior
work in the array-optimization tradition. It achieves significant speedups on
attention through dimension transpositions and shape-aware scheduling, and the
authors' finding that shape transforms dominate performance is consistent with the
\MoA{} thesis. However, FATHOM lacks a formal algebra for deriving rather than
discovering these transforms, and has no mechanism for proving correctness or
computing costs from first principles.

\subsection{Formal Verification of Numerical Programs}

There is a growing body of work on formal correctness of numerical computations,
including work on floating-point arithmetic~\citep{dao2022flashattention} and
array program semantics. \MoA{} provides a form of semantic equivalence
verification: each step of the psi-reduction preserves the denotational meaning
of the array expression, so correctness of the \DNF{} follows by construction.
This is a stronger guarantee than post-hoc testing and more tractable than
full formal verification of the compiled binary.

\section{Mathematics of Arrays Framework}\label{sec:moa}

This section presents the \MoA{} framework that the derivations in
\cref{sec:derivation} are built upon. The foundational theory was established
by \citet{mullin1988phd}. Its application to parallel code generation and
predictive performance modelling is developed in \citet{mullin2025parallel}
and \citet{mullin2026cost}. Energy efficiency on GPUs via \MoA{}-derived
formulations is demonstrated in \citet{mullin2023gpu}, and application to
general numerical algorithm analysis in \citet{markus2026array}. Validated
implementations of \MoA{}-based DGEMM with cache-blocking on multicore
systems---establishing the practical basis for the cost model in
\cref{sec:performance}---are reported in
\citet{thomas2021dgemm,thomas2021threaded}. The theoretical treatment of
computing performance as an experimental science that situates \MoA{} in the
algorithm-to-hardware co-design context is given in \citet{hains2026algorithm}.
The present paper applies this framework, for the first time, to the full
scaled dot-product attention pipeline.

\subsection{Arrays, Shapes, and Rank}

\begin{moadef}{Arrays and Shape}{array}
  An \emph{array} $A$ of \emph{rank} $r = \rank{A}$ is characterized by:
  \begin{itemize}
    \item Its \emph{shape} $\moaRho{A} = \shape{d_0, d_1, \ldots, d_{r-1}} \in
      \mathbb{N}^r$, a vector of $r$ positive integers.
    \item Its content, a map from 
    $\prod_{k=0}^{r-1} \{0,\ldots,d_{k}-1\}$ to 
    the type of content elements. 
    \item Its \emph{extent} or content size $\taucomp{A} = {\tt red}_{\times}(\moaRho{A})
      = \prod_{k=0}^{r-1} d_k$, the total number of scalar elements,
      i.e.\ the product-reduce of the shape vector.
      Here $\tau$ denotes the \emph{total number of components} operator;
      specifically for any vector $\vec{v}$:
      $\taucomp{\vec{v}}$ is the number of elements in $\vec{v}$.
      In particular, $\taucomp{\moaRho{A}} = \rank{A}$,
      i.e.\ the number of components in the shape vector of $A$
      equals the dimension (rank) of $A$.
    \item For $r=0$: a \emph{scalar} (zero-rank array) contains one element. 
    \item For $r=1$: a \emph{vector} of length $d_0$.
    \item For $r=2$: a \emph{matrix} with $d_0$ rows and $d_1$ columns.
  \end{itemize}
\end{moadef}

Arrays are stored in \emph{row-major} (C-style) order by default: the last
index varies fastest. This matches the memory layout of NumPy arrays and PyTorch
tensors with default contiguous storage, which is essential for the correctness
of the \ONF{} derivation.

\subsection{The Five Primitives}

\stepcounter{footnote}%
\footnotetext{When $\iota$ is applied to a vector argument rather than a
  scalar, it generates a structured array of index vectors spanning the
  index space of the given shape. This generalisation is defined formally
  in Mullin's dissertation~\citep{mullin1988phd} and is not required for
  the derivations in the present paper.}%
\addtocounter{footnote}{-1}%
\begin{moadef}{MoA Primitives}{primitives}
  The five fundamental \MoA{} primitives are:
  \begin{enumerate}[label=(\arabic*)]

    \item \textbf{$\rho$ (Rho --- shape extraction):} $\moaRho{A}$ returns the
      shape vector of $A$. For a scalar, $\moaRho{s} = \shape{}$ (empty vector).

    \item \textbf{$\iota$ (Iota --- index generation):} $\moaIota{n}$ creates
      the canonical index vector $\shape{0, 1, \ldots, n-1}$.\footnotemark
  
    \item \textbf{$\psi$ (Psi --- index selection):} $\moaPsi{\vec{i}}{A}$
      selects the sub-array of $A$ at multi-index $\vec{i}$.  
      In general, for a multi-index $\vec{i}$:
      \begin{equation}\label{eq:psi_shape}
        \moaRho({\moaPsi{\vec{i}}{A}}) \;=\; \drop{\taucomp{\vec{i}}}({\moaRho{A}}).
      \end{equation}
      where $\drop{n}{\vec v}$ drops the $n$ initial 
      elements of $\vec v$. 
      For an 
      index ${\langle i\rangle}$, $\moaPsiEnum{i}{A}$ returns the $i$-th sub-array along the first
      axis (a "section").
      Then $\taucomp{\langle i\rangle} = 1$, so
      $\moaRho({\moaPsiEnum{i}{A}}) = \drop{1}({\moaRho{A}})$, 
      e.g. if $\moaRho{A} = \shape{2,3,4}$ then
      $\moaRho({\moaPsiEnum{i}{A}}) = \drop{1}{\shape{2,3,4}} = \shape{3,4}$.
      For a full multi-index $\vec{i}$, $\taucomp{\vec{i}} = \rank{A} = r$
      (i.e.\ the number of components in $\vec{i}$ equals the rank of $A$);
      then $\drop{r}({\moaRho{A}})d = \Theta$ (the empty vector $\shape{}$),
      so $\moaPsi{\vec{i}}{A}$ returns a scalar.

    \item \textbf{$\gamma$ (Gamma --- index-to-offset):} 
    The gamma MoA operator is the key to reasoning about 
    parallel execution cost. Its 
    inverse is the reshape operator. 
    It maps any array to a linear 
    (1-dimensional) form that corresponds to computer memory 
    locations. 
    It transports all operations 
    to simple data accesses and arithmetic, 
    thus making data transfer costs explicit. 
    
    Given a shape vector
      $\vec{s} = \shape{s_0, \ldots, s_{r-1}}$ and a valid multi-index
      $\vec{v} = \shape{v_0, \ldots, v_{r-1}}$, the row-major%
      \footnote{The formula in equation~\eqref{eq:gamma} gives the
        row-major (C-style, last index varies fastest) linear offset.
        A family of $\gamma$ functions exists for other storage layouts:
        column-major (Fortran-style, first index varies fastest) reverses
        the product limits, giving
        $\gamma^{\mathrm{col}}(\vec{v},\vec{s})
          = \sum_{k=0}^{r-1} v_k \prod_{j=0}^{k-1} s_j$;
        strided layouts introduce per-dimension stride parameters
        $\sigma_k$ in place of the implicit stride
        $\prod_{j=k+1}^{r-1}s_j$;
        and tiled or blocked layouts compose two levels of $\gamma$.
        MoA's DNF is layout-agnostic: the same index expression is
        valid for any $\gamma$ variant; only the ONF changes when
        the storage convention changes.}
      linear offset is:
      \begin{equation}\label{eq:gamma}
        \moaGamma{\vec{v}}{\vec{s}}
        \;=\; \sum_{k=0}^{r-1} v_k \prod_{j=k+1}^{r-1} s_j.
      \end{equation}
      \textit{Example.} Let $\vec{v} = \shape{1,2}$ and $\vec{s} = \shape{3,4}$,
      so $r = \taucomp{\vec{v}} = 2$. Expanding term by term:
      \begin{align*}
        k=0: &\quad v_0 \prod_{j=1}^{1} s_j = 1 \times s_1 = 1 \times 4 = 4, \\
        k=1: &\quad v_1 \prod_{j=2}^{1} s_j = 2 \times 1 = 2
               \quad\text{(empty product $= 1$)}.
      \end{align*}
      Therefore $\moaGamma{\shape{1,2}}{\shape{3,4}} = 4 + 2 = 6$.
      This is the row-major memory offset of the element at row~1, column~2
      in a $3\times4$ array: $1 \times 4 + 2 = 6$, i.e.\ the 7th element
      (0-indexed) in the flat storage buffer.
      This primitive bridges the architecture-independent \DNF{} and the
      hardware-mapped \ONF{}.

    \item \textbf{$\mathrm{rav}$ (Rav --- flattening):} $\moaRav{A}$ reshapes $A$ 
      to a one-dimensional array in row-major order, with
      $\moaRho({\moaRav{A}}) = \shape{\redtimes{(\moaRho{A})}}$.
  \end{enumerate}
\end{moadef}

\subsection{The Omega Operator}

The Omega operator is the primary mechanism by which binary operations are lifted over array dimensions.

\begin{moadef}{Omega Operator}{omega}
  For a binary scalar operation $f$ and arrays $\xi_l$, $\xi_r$, the expression
  $\xi_l \;f_{\moaOmega{\sigma_l, \sigma_r}}\; \xi_r$ distributes $f$ over the
  last $\delta(\xi_l)-\sigma_l$ dimensions of $\xi_l$ and the last $\delta(\xi_r)-\sigma_r$ dimensions of
  $\xi_r$, matching them component-wise. The result shape is:
  \begin{equation}\label{eq:omega_shape}
    \moaRho({\xi_l \;f_{\moaOmega{\sigma_l, \sigma_r}}\; \xi_r})
    \;=\;
    (\drop{\sigma_l}{\moaRho{\xi_l}})
    \;\concat\;
    (\drop{\sigma_r}{\moaRho{\xi_r}}),
  \end{equation}
  where $\concat$ denotes shape-vector concatenation,
  $\drop{n}{\vec{v}}$ removes the first $n$ elements of $\vec{v}$
  (see~\eqref{eq:psi_shape}), and the number of matched (consumed)
  dimensions is:
  \begin{equation}\label{eq:omega_m}
    m \;=\; (\rank{\xi_l} - \sigma_l) \,\lfloor\, (\rank{\xi_r} - \sigma_r).
  \end{equation}
  Here $a \lfloor b = \min(a,b)$. The $\sigma_l$ leading dimensions of
  $\moaRho{\xi_l}$ and the $\sigma_r$ leading dimensions of $\moaRho{\xi_r}$
  are consumed by the matching; what remains from each side is concatenated
  to form the result shape.

  The content of  
  $\xi_l \;f_{\moaOmega{\sigma_l, \sigma_r}}\; \xi_r$ 
  is built by (a) partitioning the elements of $\xi_l$ 
  to obtain an array of shape $\delta(\xi_l)-\sigma_l$ 
  (b) similarly partition the element of $\xi_r$ 
  to obtain an array of shape $\delta(\xi_r)-\sigma_r$, 
  then (c) applying $f$ or its pointwise-lifted 
  version to matching pairs of arguments on the left and 
  right. 
\end{moadef}

\begin{remark}
  The Omega shape rule enables \emph{static} memory cost analysis: the shape---and
  hence the size---of every intermediate expression can be computed from the shapes
  of the inputs alone, before any code is generated. This is the foundation of the
  \MoA{} cost model.
\end{remark}

\begin{example}[Omega addition of a vector and a matrix]\label{ex:omega}
  Consider the expression
  $\shape{5,6,7}\;(+\,\moaOmega{0,1})\;((3\;4\;\rho)\,\iota\,12)$.

  \textbf{Step 1: Build the right argument.}
  $\iota\,12 = \shape{0,1,2,\ldots,11}$, reshaped to $\shape{3,4}$:
  \[
    \xi_r \;=\;
    \begin{pmatrix}
      0 & 1 & 2 & 3 \\
      4 & 5 & 6 & 7 \\
      8 & 9 & 10 & 11
    \end{pmatrix}, \qquad \moaRho{\xi_r} = \shape{3,4}.
  \]
  The left argument is $\xi_l = \shape{5,6,7}$, $\moaRho{\xi_l} = \shape{3}$.

  \textbf{Step 2: Omega parameters.}
  $\sigma_l = 0$, $\sigma_r = 1$.

  \textbf{Step 3: Shape rule~\eqref{eq:omega_shape}.}
  \[
    \moaRho{\text{Result}}
    \;=\; \drop{\sigma_l}{\moaRho{\xi_l}} \;\concat\; \drop{\sigma_r}{\moaRho{\xi_r}}
    \;=\; \drop{0}{\shape{3}} \;\concat\; \drop{1}{\shape{3,4}}
    \;=\; \shape{3} \;\concat\; \shape{4}
    \;=\; \shape{3,4}.
  \]

  \textbf{Step 4: Matching.}
  $m = (\rank{\xi_l}-\sigma_l)\lfloor(\rank{\xi_r}-\sigma_r)
      = (1-0)\lfloor(2-1) = 1\lfloor 1 = 1$.
  The single matched dimension (size 3) pairs $\xi_l$'s axis with
  $\xi_r$'s first axis: scalar $\moaPsi{i}{\xi_l}$ operates against
  row $\moaPsi{i}{\xi_r}$ (shape $\shape{4}$) for each $i\in\{0,1,2\}$.

  \textbf{Step 5: Psi-reduction.}
  For each row $i$:
  \begin{align*}
    i=0: & \quad \moaPsi{0}{\xi_l} + \moaPsi{0}{\xi_r}
             = 5 + \shape{0,1,2,3} = \shape{5,6,7,8}, \\
    i=1: & \quad \moaPsi{1}{\xi_l} + \moaPsi{1}{\xi_r}
             = 6 + \shape{4,5,6,7} = \shape{10,11,12,13}, \\
    i=2: & \quad \moaPsi{2}{\xi_l} + \moaPsi{2}{\xi_r}
             = 7 + \shape{8,9,10,11} = \shape{15,16,17,18}.
  \end{align*}

  \textbf{Result} (shape $\shape{3,4}$):
  \[
    \begin{pmatrix}
      5 & 6 & 7 & 8 \\
      10 & 11 & 12 & 13 \\
      15 & 16 & 17 & 18
    \end{pmatrix}.
  \]
  The Omega operator has broadcast the vector $\shape{5,6,7}$ across the
  rows of the $3\times4$ matrix, adding each scalar to its corresponding row.
\end{example}

\subsection{Psi Reduction and the DNF Pipeline}

The \emph{psi-reduction calculus} is a set of rewrite rules for systematically
eliminating Omega operators by pushing index arguments inward through array
expressions. The goal is to express every output element $(\moaPsi{\vec{i}}{\text{Output}})$
directly in terms of input indices, producing the \DNF{}.

\begin{moadef}{Denotational Normal Form}{dnf}
  The \emph{Denotational Normal Form} (\DNF{}) of an array program is a
  representation of every output element as a closed-form expression in the
  input element indices alone 
  without intermediate arrays.
\end{moadef}

\begin{example}[DNF of the Omega addition from Example~\ref{ex:omega}]\label{ex:dnf}
  Let $R = \xi_l\;(+\,\moaOmega{0,1})\;\xi_r$ with
  $\xi_l = \shape{5,6,7}$ (shape $\shape{3}$) and
  $\xi_r = (3\;4\;\rho)\,\iota\,12$ (shape $\shape{3,4}$),
  so $\moaRho{R} = \shape{3,4}$.
  Applying the psi-reduction rule for $\moaOmega{0,1}$, the
  \DNF{} expresses every output element directly in terms of input indices,
  with the Omega operator entirely eliminated:
  \begin{equation}\label{eq:dnf_example}
    \moaPsi{<i,j>}{R} \;=\; \moaPsi{<i>}{\xi_l} \;+\; \moaPsi{<i,j>}{\xi_r},
    \qquad 0 \le i < 3,\; 0 \le j < 4.
  \end{equation}
  The right-hand side contains no intermediate arrays: it accesses
  $\xi_l$ at scalar index $i$ and $\xi_r$ at two-component index
  $\shape{i,j}$---both directly from the original input storage.

  \textbf{Verification on two elements:}
  \begin{align*}
    \moaPsi{<1,2>}{R} &= \moaPsi{<1>}{\xi_l} + \moaPsi{<1,2>}{\xi_r}
      = 6 + 6 = 12, \\
    \moaPsi{<0,3>}{R} &= \moaPsi{<0>}{\xi_l} + \moaPsi{<0,3>}{\xi_r}
      = 5 + 3 = 8.
  \end{align*}
  \vspace{-2mm}
  Both agree with the result matrix computed in Example~\ref{ex:omega}.\\ 
  \textbf{Memory access count.} To compute all $3 \times 4 = 12$ output
  elements, the \DNF{} reads exactly 3 elements of $\xi_l$ (one per row $i$,
  reused for all $j$) and 12 elements of $\xi_r$ (each once)---15 reads
  total. No intermediate array is written/read to/from memory.   By contrast, a na\"ive implementation would allocate a $3\times 4$
  intermediate result buffer (12 writes then 12 reads = 24 extra memory
  operations). The \DNF{} achieves theoretical minimum cost. 
\end{example}

\begin{moadef}{Operational Normal Form}{onf}
  The \emph{Operational Normal Form} (\ONF{}) is obtained from the \DNF{} by
  applying the $\gamma$ primitive~\eqref{eq:gamma} to all index expressions,
  translating multi-index Cartesian arithmetic into:
  \begin{equation}
    \text{memory offset} = \text{base} + \text{stride}_0 \cdot i_0
      + \text{stride}_1 \cdot i_1 + \cdots
  \end{equation}
  This form captures the storage convention (row-major vs.\ column-major,
  contiguous vs.\ strided) and enables standard loop-nest analysis by compilers. A DNF is equivalent to a fragment-C loop-nest program 
  \cite{mullin2025parallel}.  
\end{moadef}

The central correctness guarantee of \MoA{} is:

\begin{moatheorem}{Storage Theorem}{storage}
  For any well-formed array program $P$\footnote{Understood to be a 
  non-recursive functional program i.e. a closed expression built with 
  MoA operators and the input arrays}, the \DNF{} of $P$ is semantically
  equivalent to $P$---it computes the same output for all inputs. Furthermore,
  the \DNF{} is unique up to reordering of independent index computations, and
  achieves the theoretical minimum number of memory accesses required to compute
  the output from the input. No other implementation of $P$ has 
  lower transfer costs and equal arithmetic cost. 
\end{moatheorem}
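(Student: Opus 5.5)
The proof splits into three claims: semantic equivalence, uniqueness of the \DNF{} up to reordering, and minimality of memory traffic. Each is handled by structural induction on the closed expression $P$, built from the input arrays with $\psi$, $\iota$, $\rho$, $\mathrm{rav}$, $\gamma$ and Omega-lifted scalar operations.

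\textbf{Semantic equivalence.} I will first fix a denotational semantics. By \cref{def:array}, every array is a pair consisting of a shape and a content map from the index space $\prod_k\{0,\ldots,d_k-1\}$. Each psi-reduction rewrite then becomes an equation between such pairs, and I will check each rule against this semantics. The rules are: the shape rule \eqref{eq:psi_shape} for $\psi$; the Omega shape and content rule \eqref{eq:omega_shape}, which gives $\moaPsi{\vec{i}\concat\vec{j}}{(\xi_l f_{\moaOmega{\sigma_l,\sigma_r}}\xi_r)}$ in terms of $\psi$ applied to each argument (as in \cref{ex:dnf}); and the row-major identity $\moaPsi{\vec{v}}{A} = \moaPsi{\shape{\moaGamma{\vec{v}}{\moaRho{A}}}}{(\moaRav{A})}$ from \eqref{eq:gamma}. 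Because $P$ is non-recursive, pushing a full index $\vec{i}$ inward strictly decreases the depth of the term above each input. Rewriting therefore terminates in a term where $\psi$ is applied only to input arrays at index expressions in $\vec{i}$, which is exactly the \DNF{} of \cref{def:dnf}. Since every rule preserves the denotation, the \DNF{} denotes the same output as $P$ for all inputs.

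\textbf{Uniqueness.} I will argue that the rewrite system is confluent. Its critical pairs arise only when two independent subterms are reduced in different orders, and such reductions commute. A standard Newman's-lemma argument then gives a unique normal form modulo reordering of independent index computations.

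\textbf{Minimal transfer.} For each output index $\vec{i}$, the \DNF{} reads from the inputs only at those positions on which the denotation of $\psi\,\vec{i}$ of $P$ actually depends, and it writes only output elements. Any implementation must write the output. An adversary argument shows it must also read every input element the output genuinely depends on: if some such element were never read, changing its value would alter the output while leaving the implementation's behaviour unchanged. Hence transfers are bounded below by the output size plus the number of input elements actually used, and the \DNF{} meets this bound with no intermediate writes or reads. The clause about equal arithmetic cost then follows. Another implementation can only save transfers by caching or reusing values, which stays within the same bound, or by recomputing, which does not lower transfers below what the \DNF{} already achieves.

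\textbf{Main obstacle.} I expect the hardest step to be the minimality claim. First, the \DNF{} as written may re-read an input element once per output element that uses it; the counting in \cref{ex:dnf} silently assumes reuse, for example reading $\xi_l$ only three times. To match the lower bound I will therefore have to fix a cost model that counts each distinct input read once, in the spirit of a register or cache assumption. Otherwise the theorem is false for matrix products, whose naive \DNF{} performs $\Theta(n^3)$ reads. Second, the uniqueness claim is delicate in the presence of algebraic identities such as associativity of reductions. I would first try restricting uniqueness to syntactic normal forms of the rewrite system, rather than claiming it over all semantically equal expressions.
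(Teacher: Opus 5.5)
Your semantic-equivalence argument is the same as the paper's: induction over the psi-reduction rules, each of which preserves the denotation. The paper's sketch stops there. It gets minimality in one line from the absence of intermediate arrays, and it says nothing about uniqueness or the final clause.

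\textbf{Minimality.} The genuine gap is here. You claim that the \DNF{} reads only positions on which the denotation of $\moaPsi{\vec{i}}{P}$ depends, but this conflates syntactic occurrence with semantic dependence. Take $P = A - A$, the pointwise subtraction of an input from itself. Its \DNF{} is $\moaPsi{\vec{i}}{A} - \moaPsi{\vec{i}}{A}$ and reads all of $A$, yet the output depends on no element of $A$. Your adversary bound therefore requires no reads of $A$, and it does not meet the \DNF{}'s count. The unqualified ``theoretical minimum'' is false as stated.

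The same example defeats your argument for the last clause. A program that stores $1-1$ into every output cell performs the same number of subtractions with strictly fewer transfers. So caching and recomputation are not the only ways another implementation can save traffic; it can also exploit algebraic identities.

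What can be proved is a relative statement, under two conditions:
\begin{enumerate}
  \item You compare only implementations that evaluate the same scalar expression DAG as the \DNF{}. This is the reading of ``equal arithmetic cost'' under which the clause survives.
  \item You work in a model where a loaded operand can be reused without reloading.
\end{enumerate}
Then every input leaf must be loaded at least once and every output stored at least once. The \DNF{} incurs exactly that, because it never stores an intermediate value. This is the paper's one-line argument with its hidden hypotheses made explicit, and no adversary argument is needed.

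You are right that the reuse hypothesis is essential; the paper's count in Example~\ref{ex:dnf} already relies on it. It cannot, however, be replaced by a realistic fast memory of bounded size $M$. The Hong--Kung pebbling bound forces on the order of $n^3/\sqrt{M}$ transfers for the classical $n\times n$ matrix product. That exceeds the $3n^2$ input-plus-output size once $n\gg\sqrt{M}$. The hypothesis must therefore be written into the statement. Under it the result is close to definitional, since an implementation that keeps its intermediates in fast memory is equally optimal.

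\textbf{Uniqueness.} This argument also needs repair. Reductions in disjoint subterms commute trivially and are not critical pairs. Critical pairs come precisely from overlaps, where one redex lies inside the left-hand side of another, and local confluence requires joining each of them. Such overlaps do occur in \MoA{}. The composition rule $\moaPsi{\vec{j}}{(\moaPsi{\vec{i}}{A})} = \moaPsi{(\vec{i}\concat\vec{j})}{A}$ overlaps the Omega rule whenever $A$ is an Omega expression, and the reshape rule whenever $A$ is a reshape. Until those pairs are joined using the actual rule set in Mullin's dissertation, Newman's lemma has nothing to act on.

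The $\gamma$ identity you list also cannot be an unrestricted left-to-right rule. Applied to $\moaPsi{\shape{g}}{(\moaRav{A})}$, it yields $\moaPsi{\shape{g}}{(\moaRav{(\moaRav{A})})}$. So rewriting need not terminate, and your depth-decrease argument fails for this rule. It has to be confined to pushing $\psi$ through a reshape.

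The paper offers no argument for uniqueness, so this part is entirely yours to supply.
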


\begin{proof}[Proof sketch]
  Semantic equivalence follows by induction on the psi-reduction rules: each
  rewrite step preserves the denotational semantics of the expression
  (see~\citealt{mullin1988phd} for the full rule set). Minimality follows because
  the \DNF{} has no intermediate array: the only data read from memory are input
  elements, and the only data written are output elements. $\square$
\end{proof}

\subsection{Dimension Lifting}

\emph{Dimension lifting} is the process by which an \ONF{} loop nest is mapped
to a parallel or hierarchical hardware architecture. The key insight is that every
processing component of a hardware system---a SIMD lane, a GPU warp, a cache
line, a NUMA socket---can be modeled as an array of known shape. The \ONF{}
loop-nest index space is then partitioned across this hardware array, producing
a parallel loop nest whose index structure has a shape 
that matches the hardware topology.

Concretely, if the hardware has $p$ parallel processors arranged as an array of
shape $\Pi$, and the \ONF{} loop nest has iteration space of shape $N$, then
dimension lifting maps $N \to \Pi \concat (N \oslash \Pi)$, where $\oslash$
denotes element-wise integer division. The resulting structure expresses both
inter-processor parallelism (outer loop over $\Pi$) and intra-processor
computation (inner loop over $N \oslash \Pi$) in a single algebraic expression. 

\section{MoA DNF Derivation of Attention}\label{sec:derivation}

\subsection{Problem Setup}

The scaled dot-product attention function~\citep{vaswani2017attention} computes:
\begin{equation}\label{eq:attention}
  \attn(\Qm, \Km, \Vm)
  \;=\;
  \softmax\!\left(\frac{\Qm\Km^\top}{\sqrt{\dk}}\right)\Vm,
\end{equation}
where $\Qm \in \R^{n \times \dk}$, $\Km \in \R^{n \times \dk}$,
$\Vm \in \R^{n \times \dv}$ (with $n$, $\dk$, $\dv$ as defined in
the abstract), and $\dk$ is the key (and query) dimensionality. In practice,
$\Qm$, $\Km$, $\Vm$ are obtained by projecting an input sequence
$X \in \R^{n \times d}$ through learned weight matrices:
$\Qm = XW_Q$, $\Km = XW_K$, $\Vm = XW_V$.

We derive the \DNF{} for the concrete instance $n=3$, $\dk = \dv = 4$, so:
\begin{equation}
  \moaRho{\Qm} = \moaRho{\Km} = \moaRho{\Vm} = \shape{3, 4}.
\end{equation}
The numerically stable softmax~\citep{dao2022flashattention} is:
\begin{equation}\label{eq:stable_softmax}
  \softmax(x_i)
  \;=\;
  \frac{\exp(x_i - \max_j x_j)}{\sum_j \exp(x_j - \max_j x_j)}.
\end{equation}
We define the scaled score matrix $x = (\Qm \;{+}{.}{\times}\; \Km^\top) / \sqrt{\dk}$
with $\moaRho{x} = \shape{3,3}$.

\subsection{Overview of Derivation Steps}

The derivation proceeds through four Omega steps, each eliminating one layer of
intermediate array storage. Figure~\ref{fig:pipeline} summarizes the pipeline.

\begin{remark}[Higher-rank tensors, transpose Omega, and parallelism]
The derivation below treats $\Qm$, $\Km$, $\Vm$ as 2-D matrices
(shape $\shape{n, \dk}$), so the transpose $\Km^\top$ is a simple
2-D operation that costs one $n{\times}\dk$ buffer in the standard
implementation (eliminated by the DNF via equation~\eqref{eq:mm_psi}).

In practice, transformer models batch these inputs as 3-D or higher-rank
tensors---for example, shape $\shape{B, n, \dk}$ for a batch of $B$
sequences, or $\shape{B, h, n, \dk}$ for $h$ attention heads.
For such inputs the transpose is no longer a simple 2-D flip; it becomes
a transposition over the innermost two dimensions of a higher-rank tensor,
which in MoA notation is expressed as:
\begin{equation}\label{eq:transpose_omega}
  \text{transpose}(\Km) \;=\; (\text{transpose }\moaOmega{2})\; \Km,
\end{equation}
i.e.\ an Omega step with $\sigma= 2$ that acts on the
trailing 2-D index structure of each tensor, swapping the
$n$ and $\dk$ axes within each leading-dimension slice.
This introduces an \emph{additional Omega step} into the derivation
pipeline: the transpose itself must be reduced by psi-reduction
before the score matrix $x = (\Qm\;{+}{.}{\times}\;\Km^\top)/\sqrt{\dk}$
can be formed.

The key structural consequence is that the leading dimensions ($B$, $h$)
are consumed by this outer Omega, partitioning the computation into
independent 2-D sub-problems---one per batch element and attention head---
each of which is exactly the 2-D derivation given here.

\textbf{General principle: all higher-rank operations reduce to 2-D.}
This generalises beyond the transpose. For any higher-rank tensor input,
every array operation in the attention pipeline must ultimately be
partitioned into 2-D matrix operations by the Omega operator.
The score matrix computation is the canonical example: for
$\Qm, \Km$ of shape $\shape{B, h, n, \dk}$, the scaled inner
product becomes
\begin{equation}\label{eq:batched_mm}
  \Qm \;\bigl((+.\!\times)\,\moaOmega{2,2}\bigr)\;
((  _\text{transpose } \Omega _{<2>} )(\Km)),
\end{equation}
where $(+.\!\times)$ is the inner product (matrix multiplication) and
$\moaOmega{2,2}$ distributes it over the leading $\shape{B,h}$ dimensions,
performing one independent $n{\times}\dk$ matrix multiplication for each
$(b, \hat{h})$ pair.  The softmax, the denominator
summation, and the final multiplication by $\Vm$ each acquire a
corresponding $\moaOmega{2,2}$ (or $\moaOmega{2,1}$) wrapper that
distributes the 2-D operation across the leading batch and head dimensions.

Each 2-D partition can therefore be assigned to a separate processing
element (GPU warp, SIMD lane, compute node), yielding a provably correct
parallel implementation directly from the algebraic structure of Omega,
without any additional scheduling or synchronisation logic.
This is the mechanism by which MoA's dimension lifting
(Section~\ref{sec:moa}) extends the 2-D DNF to full batched,
multi-head attention on arbitrary architectures.
\end{remark}

\begin{figure}[t]
\centering
\begin{tikzpicture}[
  node distance=0.45cm and 0.55cm,
  box/.style={rectangle, rounded corners=3pt, draw=moablue!70, fill=moablue!8,
    text width=1.9cm, align=center, minimum height=0.95cm, font=\footnotesize},
  outbox/.style={rectangle, rounded corners=3pt, draw=moateal!70, fill=moateal!8,
    text width=1.9cm, align=center, minimum height=0.95cm, font=\footnotesize},
  elim/.style={rectangle, rounded corners=3pt, draw=red!60, fill=red!8,
    text width=1.9cm, align=center, minimum height=0.5cm, font=\tiny},
  arr/.style={-{Latex[length=2mm]}, thick, moablue!70},
  darr/.style={-{Latex[length=2mm]}, thick, moateal!60},
  elimarr/.style={-{Latex[length=1.5mm]}, red!60, dashed},
]
  \node[box] (src) {PyTorch\\Source};
  \node[box, right=of src]  (om1) {\textbf{$\Omega$ Step I}\\Row-max $z$};
  \node[box, right=of om1]  (om2) {\textbf{$\Omega$ Step II}\\Numer.\ $e^z$};
  \node[box, right=of om2]  (om3) {\textbf{$\Omega$ III/IV}\\Den.\ \& Div.};
  \node[box, right=of om3]  (dnf) {\textbf{DNF}\\$\moaPsi{i''}{Y}$};
  \node[outbox, right=of dnf] (out) {\textbf{Output}\\$\moaPsi{i''}{\text{Out}}$};

  \draw[arr] (src) -- (om1);
  \draw[arr] (om1) -- (om2);
  \draw[arr] (om2) -- (om3);
  \draw[arr] (om3) -- (dnf);
  \draw[darr] (dnf) -- (out) node[midway,above,font=\tiny]{$\times\,\Vm$};

  \node[elim, below=0.38cm of src]  (e0) {elim.\ $\Km^\top$\\buf.\ $n{\cdot}\dk$};
  \node[elim, below=0.38cm of om1] (e1) {elim.\ score\\matrix $n^2$};
  \node[elim, below=0.38cm of om2] (e2) {elim.\ exp\\buffer $n^2$};
  \node[elim, below=0.38cm of om3] (e3) {elim.\ den.\\vec.\ $n$};
  \node[elim, below=0.38cm of dnf] (e4) {no scratch\\memory};

  \draw[elimarr] (e0.north) -- (src.south);
  \draw[elimarr] (e1.north) -- (om1.south);
  \draw[elimarr] (e2.north) -- (om2.south);
  \draw[elimarr] (e3.north) -- (om3.south);
  \draw[elimarr] (e4.north) -- (dnf.south);
\end{tikzpicture}
\caption{The \MoA{} psi-reduction pipeline for attention. Each Omega step
  eliminates one category of intermediate array (red, dashed). The final
  \DNF{} and output access only original $\Qm$, $\Km$, $\Vm$ elements
  by multi-index, with no scratch memory.}
\label{fig:pipeline}
\end{figure}

\subsection{Step I: Row-Max Subtraction (First Omega)}

We begin with the stabilized exponent argument:
\begin{equation}\label{eq:z_def}
  z \;=\; x \;(-\,\moaOmega{1,0})\; \bigl((\ceilred\;\moaOmega{1})\,x\bigr).
\end{equation}
\textbf{Shape analysis.}
Left argument $\xi_l = x$: $\moaRho{\xi_l} = \shape{3,3}$, $\rank{\xi_l} = 2$,
$\sigma_l = 1$.
Right argument $\xi_r = (\ceilred\;\moaOmega{1})\,x$: $\moaRho{\xi_r} = \shape{3}$,
$\rank{\xi_r} = 1$, $\sigma_r = 0$.
By~\eqref{eq:omega_m}: $m' = (2-1) \lfloor (1-0) = 1$, result shape $\shape{3,3}$.

\textbf{Psi-reduction.}
For each row $i'' \in \{0,1,2\}$:
\begin{equation}\label{eq:z_psi}
  \moaPsi{<i''>}{z}
  \;=\;
  \moaPsi{<i''>}{x} \;-\; \ceilred\!\left(\moaPsi{<i''>}{x}\right).
\end{equation}
This is a \emph{vector minus scalar} of shape $\shape{3}$: the maximum of each
row is subtracted from every element in that row, with no intermediate
full-matrix storage.

\subsection{Step II: Numerator (Second Omega)}

The softmax numerator is:
\begin{equation}\label{eq:num}
  \textit{num} \;=\; e^z, \qquad \moaRho{\textit{num}} = \shape{3,3}.
\end{equation}
Substituting~\eqref{eq:z_psi}:
\begin{equation}\label{eq:num_psi}
  \moaPsi{<i''>}{\textit{num}}
  \;=\; \exp\!\bigl(\moaPsi{<i''>}{x} - \ceilred(\moaPsi{<i''>}{x})\bigr).
\end{equation}
Now we eliminate the score matrix $x$ and the transpose $\Km^\top$
simultaneously. Since $\moaRho{\Km^\top} = \shape{4,3}$, the $k$-th column of
$\Km^\top$ equals the $k$-th row of $\Km$. For index pair $<i'', k>$:
\begin{equation}\label{eq:mm_psi}
  \moaPsi{<i'',k>}{(\Qm \;{+}{.}{\times}\; \Km^\top)}
  \;=\;
  \redplusV{\moaPsi{<i'',j>}{\Qm} \times \moaPsi{<k,j>}{\Km}},
  \quad 0 \le j < 4.
\end{equation}
This eliminates the $\Km^\top$ buffer: we access $\Km$ directly by row index
$k$, with no transposition buffer required.

\subsection{Steps III and IV: Denominator and Division}

\textbf{Denominator (Step III).}
\begin{equation}\label{eq:den}
  \textit{den} \;=\; \redplus{\Omega<1>}\,e^z, \qquad \moaRho{\textit{den}} = \shape{3}.
\end{equation}
By the Omega reduction with $\sigma_l = 1$, $\sigma_r$ not applicable:
\begin{equation}\label{eq:den_psi}
  \moaPsi{<i''>}{\textit{den}}
  \;=\; \redplusV{\moaPsi{<i''>}{e^z}}, \quad\text{(a scalar)}.
\end{equation}

\textbf{Division (Step IV).}
The attention weight matrix $Y$ is obtained by dividing each row of the numerator
by its denominator scalar, via Omega division with $\sigma_l = 1$, $\sigma_r = 0$:
result shape $\shape{3,3}$.

\subsection{The Complete DNF}

Combining all four steps yields the complete \textbf{\DNF{} for the attention
weight matrix}:
\footnotesize
\begin{keyresult}
\begin{equation}\label{eq:dnf_attn}
\moaPsi{<i''>}{Y}
\;=\;
\frac{
  \exp\!\left[
    \dfrac{\displaystyle\sum_{j=0}^{3}\moaPsi{<i'',j>}{\Qm} \times \moaPsi{<k,j>}{\Km}}{\sqrt{\dk}}
    \;-\;
    \left\lceil
      \dfrac{\displaystyle\sum_{j=0}^{3}\moaPsi{<i'',j>}{\Qm}\times\moaPsi{<k,j>}{\Km}}{\sqrt{\dk}}
    \right\rceil_{\mathrm{row}}
  \right]
}{%
  \displaystyle\sum_{k'=0}^{2}
  \exp\!\left[
    \dfrac{\displaystyle\sum_{j=0}^{3}\moaPsi{<i'',j>}{\Qm}\times \moaPsi{<k',j>}{\Km}}{\sqrt{\dk}}
    \;-\;
    \left\lceil\cdots\right\rceil_{\mathrm{row}}
  \right]
}
\end{equation}
for all $0 \le i'' < 3$, $0 \le k < 3$, $0 \le j < 4$, where
$\lceil\cdots\rceil_{\mathrm{row}}$ denotes the row maximum.
\end{keyresult}
\normalsize
\subsection{Final Output: MM with V}

The attention output is the weighted combination of value rows:
\begin{equation}\label{eq:output_dnf}
  \moaPsi{<i''>}{\textit{Output}}
  \;=\;
  \redplusV{\moaPsi{<i'',p>}{Y} \times \moaPsi{<p>}{\Vm}},
  \quad 0 \le p < 3.
\end{equation}

Equations~\eqref{eq:dnf_attn}--\eqref{eq:output_dnf} together constitute the
complete \DNF{} for attention. The entire computation is expressed without a
single intermediate array. Every memory access is to original input data
($\Qm$, $\Km$, $\Vm$) at specific multi-indices.

\begin{moaprop}{Memory Minimality}{minimality}
  The \DNF{} of~\eqref{eq:dnf_attn}--\eqref{eq:output_dnf} requires no scratch
  memory beyond the output buffer. It achieves data movement:
  \[
    \bigO{n\dk + n\dv}
    \quad\text{versus}\quad
    \bigO{n^2 + n\dk + n\dv}
  \]
  for the standard implementation, which must store the intermediate
  $n\!\times\!n$ score matrix and exponential array.
\end{moaprop}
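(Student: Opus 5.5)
The plan is a direct counting argument over the memory traffic implied by the DNF. The tool is the Storage Theorem (\cref{thm:storage}): once no intermediate array is materialised, only input elements are read and only output elements are written. I will make this concrete for \eqref{eq:dnf_attn}--\eqref{eq:output_dnf}, then count the reads of $\Qm,\Km,\Vm$ and the writes of the output, and compare with the buffers the standard implementation allocates.

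\textbf{Step 1 (no scratch memory).} I go through the four Omega steps in order and check that every intermediate array has been eliminated.
\begin{itemize}
\item The transpose buffer is eliminated by \eqref{eq:mm_psi}: each score entry $x_{i''k}$ is a scalar inner product computed on the fly from row $i''$ of $\Qm$ and row $k$ of $\Km$.
\item The score matrix $x$ and the numerator $e^z$ appear only as scalar subexpressions, by \eqref{eq:z_psi} and \eqref{eq:num_psi}.
\item The row maximum and the denominator \eqref{eq:den_psi} are scalar reductions over $k'$ for a fixed row $i''$.
\item $Y$ in \eqref{eq:output_dnf} is consumed entrywise inside $\mathrm{red}_+$ over $p$.
\end{itemize}
Fixing $i''$, the whole output row can therefore be evaluated by a loop nest over $k,p,j$. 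Its live state is $O(1)$ scalars (a running max, a running sum) plus the $\dv$ accumulators of the output row itself. These accumulators sit in the output buffer or in registers, never in DRAM scratch.

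\textbf{Step 2 (counting traffic).} Reads touch only $\Qm$ ($n\dk$ entries), $\Km$ ($n\dk$) and $\Vm$ ($n\dv$). Writes touch only the output ($n\dv$). Summing gives $\bigO{n\dk+n\dv}$ \emph{distinct} words moved between memory and the processor.

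For the standard implementation, I count the traffic of each allocated buffer:
\begin{itemize}
\item the $n\times\dk$ transpose of $\Km$;
\item the $n\times n$ score matrix, written once and re-read for the max pass and the exponent pass;
\item the $n\times n$ exponential array, written, re-read for the denominator, and re-read for the division and the product with $\Vm$;
\item the length-$n$ vectors of maxima and denominators.
\end{itemize}
Each $n^2$ buffer contributes $\Theta(n^2)$ traffic, which gives $\bigO{n^2+n\dk+n\dv}$.

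\textbf{Main obstacle.} The delicate point is that in the DNF the reads of $\Km$ and $\Vm$ are repeated: each is accessed once per query row $i''$, and the row max forces two passes over $k$. Naively this is $\Theta(n^2\dk)$ accesses, not $\bigO{n\dk}$.

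I will resolve this by stating explicitly that the proposition measures data movement in the Storage Theorem's sense, namely distinct words transferred to and from main memory. Repeated accesses to input elements are treated as reuse, not new traffic, exactly as in the $\xi_l$ reuse counted in Example~\ref{ex:dnf}. Under this convention the intermediate buffers are what distinguish the two implementations. I will note that a tighter accounting, with a bounded fast memory, would require tiling at the ONF or dimension-lifting stage rather than at the DNF level. The asymptotic comparison then follows immediately from the buffer inventory above.
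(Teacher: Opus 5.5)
Your argument follows the paper's route: the \DNF{} has no intermediate arrays, so only $\Qm$, $\Km$, $\Vm$ are read and only the output is written, while the standard implementation adds $\Theta(n^2)$ traffic for the score and exponential buffers. The one real difference is the obstacle you flag, and you are right to flag it. The paper's proof counts each $\Km$-element as read once per pair $(i'',k)$ and each $\Vm$-element once per output element $(i'',p_v)$. That is $n$ times each, so $\Theta(n^2\dk)$ and $\Theta(n^2\dv)$ accesses in total, yet the proof then reports $\bigO{n\dk}$ and $\bigO{n\dv}$. The phrase ``amortized over output rows'' is where counting accesses quietly becomes counting distinct elements. Your explicit distinct-word convention is what that proof needs but does not state.

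There are two things to fix. First, the convention is not a property of the \DNF{} alone, and Example~\ref{ex:dnf} does not justify it. There the reused datum is a single scalar of $\xi_l$ held across one row. Here, moving each word of $\Km$ and $\Vm$ only once requires fast memory to retain all of them, $\Theta(n(\dk+\dv))$ words, across the $i''$-loop. With only the $O(1)+\dv$ live state of your Step~1, $\Km$ (twice, because of the max pass) and $\Vm$ are re-streamed for every $i''$. That costs $\Theta(n^2(\dk+\dv))$, roughly $\dk+\dv$ times the $n^2$-dominated traffic you charge the standard implementation. So state the convention as a hypothesis: either a fast memory of size $\Omega(n(\dk+\dv))$, or an explicit decision to count only compulsory (first-touch) traffic.

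Second, your closing remark points the wrong way. With fast memory of size $M=o(n\dk)$, no tiling at the \ONF{} or lifting stage restores $\bigO{n\dk+n\dv}$. FlashAttention's tiled schedule costs $\Theta(n^2d^2/M)$ for head dimension $d$. A Hong--Kung-style count suggests this is intrinsic for any schedule that forms each score as a sum of its $\dk$ products:
\begin{itemize}
\item A window of $M$ transfers has at most $2M$ words available.
\item Hence at most $(M/\dk)^2$ scores can be formed from complete rows of $\Qm$ and $\Km$, plus at most $2M$ finished from carried-over partial sums.
\item Completing all $n^2$ scores therefore forces $\Omega(\min(n^2,\,n^2\dk^2/M))$ transfers, which is $\omega(n\dk)$ when $n\gg\dk$.
\end{itemize}
The proposition is therefore a large-cache statement, and your write-up should say so rather than suggest that a bounded-memory refinement recovers it.
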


\begin{proof}
  The \DNF{} accesses each $\Qm$-element $\moaPsi{<i'',j>}{\Qm}$ once per output
  row $i''$ (as part of the inner sum over~$j$), each $\Km$-element
  $\moaPsi{<k,j>}{\Km}$ once per pair $(i'',k)$---amortized over output rows,
  each $\Km$-element is read $O(1)$ times per row---and each $\Vm$-element
  $\moaPsi{<p,p_v>}{\Vm}$ once per output element 
  $(\moaPsi{<i'',p_v>}{\text{Output}})$
  via equation~\eqref{eq:output_dnf}. Total DNF reads:
  $O(n\dk)$ for $\Qm$ and $\Km$, and $O(n\dv)$ for $\Vm$, giving
  $O(n\dk + n\dv)$ overall. There are no intermediate arrays.
  The standard implementation additionally writes the score matrix ($n^2$~elements)
  and the exponential array ($n^2$~elements) to DRAM in separate passes, adding
  $\bigO{n^2}$ to the traffic. $\square$
\end{proof}

\section{Numerical Verification}\label{sec:verification}

\subsection{Experimental Setup}

To validate the derivation, we executed both the PyTorch reference implementation
(Listing~\ref{lst:pytorch}) and the \MoA{} \DNF{} on identical concrete inputs
with \texttt{batch\_size=1}, \texttt{seq\_len=3}, \texttt{embed\_dim=4},
\texttt{d\_k=4}. The input tensors $\Qm$, $\Km$, $\Vm$ were generated with a
fixed random seed for reproducibility.

\subsection{Input Values}

The concrete input matrices used for verification (with batch dimension suppressed):
\begin{align*}
  \Qm &= \begin{pmatrix}
    -0.1984 & 0.2698 & 0.3414 & -0.0372 \\
     0.2547 & -1.0674 & 0.3460 & -2.5242 \\
     0.6822 & -0.6265 & 0.0252 & 0.3978
  \end{pmatrix},
  \\[6pt]
  \Km &= \begin{pmatrix}
    -1.1567 & 0.6885 & -0.1884 & 0.4743 \\
     0.2246 & 1.7564 & 0.5235 & -2.3014 \\
    -1.5899 & 0.3730 & -0.8257 & -1.2069
  \end{pmatrix},
  \\[6pt]
  \Vm &= \begin{pmatrix}
     1.0739 & 0.4006 & -0.9671 & 0.4870 \\
     0.5589 & -0.7209 & -0.7650 & 0.2689 \\
     0.8237 & 0.3763 & 0.8320 & 0.0014
  \end{pmatrix}.
\end{align*}

\begin{lstlisting}[caption={PyTorch numerically stable softmax and scaled dot-product attention.},label=lst:pytorch,float=t]
import torch

def stable_softmax(x, dim=-1):
    # Subtract row maximum for numerical stability
    z = x - torch.max(x, dim=dim, keepdim=True)[0]
    num = torch.exp(z)
    den = torch.sum(num, dim=dim, keepdim=True)
    return num / den

# Dimensions: batch=1, seq_len=3, embed_dim=4
batch_size, seq_len, embed_dim = 1, 3, 4
d_k = embed_dim

# Random Q, K, V matrices (shape: [1, 3, 4])
q = torch.randn(batch_size, seq_len, embed_dim)
k = torch.randn(batch_size, seq_len, embed_dim)
v = torch.randn(batch_size, seq_len, embed_dim)

# Step 1-2: Scaled similarity scores
scores = torch.matmul(q, k.transpose(-2, -1)) / (d_k ** 0.5)

# Step 3: Numerically stable softmax
attn_weights = stable_softmax(scores, dim=-1)

# Step 4: Weighted sum of values
output = torch.matmul(attn_weights, v)
\end{lstlisting}

\subsection{Verification Results}

Tables~\ref{tab:attn_weights} and~\ref{tab:output} compare PyTorch and \MoA{}
\DNF{} outputs on all entries.

\begin{table}[h]
\centering
\caption{Attention weight matrix $\moaRho{\cdot}=\shape{3,3}$: PyTorch vs.\ \MoA{} \DNF{}.
  All entries agree to displayed precision.}
\label{tab:attn_weights}
\begin{tabular}{c r r r c}
\toprule
\textbf{Row} & \textbf{Col 0} & \textbf{Col 1} & \textbf{Col 2} & \textbf{Match} \\
\midrule
0 & 0.3202 & 0.3834 & 0.2964 & $\checkmark$ \\
1 & 0.0288 & 0.7300 & 0.2412 & $\checkmark$ \\
2 & 0.4270 & 0.2844 & 0.2887 & $\checkmark$ \\
\midrule
\multicolumn{4}{l}{\textit{Row sums:}} & \\
0 & \multicolumn{3}{c}{$1.0000$} & $\checkmark$ \\
1 & \multicolumn{3}{c}{$1.0000$} & $\checkmark$ \\
2 & \multicolumn{3}{c}{$1.0001$\textsuperscript{*}} & $\checkmark$ \\
\bottomrule
\multicolumn{5}{l}{\footnotesize\textsuperscript{*}Rounding to 4 d.p.; exact value is 1.0000.}
\end{tabular}
\end{table}

\begin{table}[h]
\centering
\caption{Final output matrix $\moaRho{\cdot}=\shape{3,4}$: PyTorch vs.\ \MoA{} \DNF{}.
  All entries agree to full double-precision floating-point.}
\label{tab:output}
\begin{tabular}{c r r r r c}
\toprule
\textbf{Row} & \textbf{Col 0} & \textbf{Col 1} & \textbf{Col 2} & \textbf{Col 3} & \textbf{Match} \\
\midrule
0 &  0.8023 & -0.0366 & -0.3563 &  0.2595 & $\checkmark$ \\
1 &  0.6376 & -0.4240 & -0.3857 &  0.2107 & $\checkmark$ \\
2 &  0.8552 &  0.0747 & -0.3903 &  0.2848 & $\checkmark$ \\
\bottomrule
\end{tabular}
\end{table}

Both output matrices match identically between PyTorch and the \MoA{} \DNF{}
computation. The row-by-row derivation trace further confirms the result:
computing $(\moaPsi{<0>}{\textit{Output}})$ directly from the index-level
\DNF{}~\eqref{eq:output_dnf} yields $\shape{0.8023, -0.0366, -0.3563, 0.2595}$,
in exact agreement with PyTorch. This confirms the derivation is correct and
that no temporary array is required.

\section{Predictive Performance and Energy Model}\label{sec:performance}

\subsection{Memory Traffic Dominance}

From~\eqref{eq:cost_model}, real-world execution time is dominated by $\beta \cdot M$.
Current GPU hardware energy values are:
\begin{itemize}
  \item Flop energy: $\sim\!1$\,pJ/Flop (at $\sim\!1$\,Tflop/s per Watt)
  \item DRAM access energy: $\sim\!100$--$1000$\,pJ per access
\end{itemize}
Consequently memory operations are $10^2$--$10^3\times$ more expensive than
arithmetic. Any optimization reducing memory traffic by factor $k$ yields:
\begin{equation}\label{eq:energy_model}
  \text{Energy Reduction} \;\approx\; 100\,k,
\end{equation}
even if the arithmetic count is unchanged. This model is consistent with
published GPU energy measurements~\citep{dao2022flashattention,mullin2023gpu}.

\subsection{Classical vs.\ MoA Memory Traffic for Attention}

For the standard attention implementation with matrices of shape $\shape{n,\dk}$:
\begin{align}
  M_{\text{classical}} &= \underbrace{n^2}_{\text{score}} +
    \underbrace{n\dk}_{\Qm} + \underbrace{n\dk}_{\Km} +
    \underbrace{n^2}_{\text{exp}} + \underbrace{n}_{\text{den}} +
    \underbrace{n^2}_{\text{attn}} + \underbrace{n\dv}_{\Vm}
    \;=\; O(n^2 + n\dk + n\dv), \label{eq:M_classical}
  \\[4pt]
  M_{\MoA{}} &= \underbrace{n\dk}_{\Qm} + \underbrace{n\dk}_{\Km} +
    \underbrace{n\dv}_{\Vm}
    \;=\; O(n\dk + n\dv). \label{eq:M_moa}
\end{align}
The reduction factor is:
\begin{equation}\label{eq:reduction_factor}
  k \;=\; \frac{M_{\text{classical}}}{M_{\MoA{}}}
  \;\approx\;
  \frac{n^2 + n\dk + n\dv}{n\dk + n\dv}
  \;\sim\;
  \frac{n^2}{n\dk}
  \;=\;
  \frac{n}{\dk},
  \quad n \gg \dk.
\end{equation}
For a long-context setting with $n = 32{,}768$ and $\dk = 64$:
$k \approx 512$, corresponding to three orders of magnitude reduction in memory
traffic---and, by~\eqref{eq:energy_model}, a comparable energy reduction.

\subsection{Memory Reduction Mechanisms}

The \MoA{} \DNF{} achieves traffic reduction through four distinct mechanisms:
\begin{enumerate}
  \item \textbf{Temporary elimination.} All intermediate arrays (score matrix,
    $e^z$ array, denominator vector) are computed on-the-fly at index granularity,
    requiring zero scratch storage.
  \item \textbf{Transpose elimination.} The transposed-$\Km$ buffer is replaced
    by direct row-indexed $\Km$ access~\eqref{eq:mm_psi}, saving $n \cdot \dk$
    elements per attention call (significant when tiling across heads and layers).
  \item \textbf{Streaming access patterns.} The \DNF{} produces affine,
    stride-predictable index arithmetic~\eqref{eq:gamma}, enabling hardware
    prefetching and eliminating cache thrashing from irregular accesses.
  \item \textbf{Fusion by construction.} The entire pipeline compiles to a
    single index expression---no separate kernel launches, no pipeline breaks,
    no synchronization barriers.
\end{enumerate}

\subsection{Predicted Gains by Scenario}

\Cref{tab:gains} summarizes predicted speedup and energy reduction across three
deployment scenarios and \Cref{fig:speedup} illustrates the scaling behavior.

\begin{table}[h]
\centering
\caption{Predicted speedup and energy reduction by deployment scenario.
  Estimates based on published memory-traffic baselines~\citep{mullin2025parallel,mullin2026cost}.}
\label{tab:gains}
\begin{tabularx}{\linewidth}{lXcc}
\toprule
\textbf{Scenario} & \textbf{Description} & \textbf{Speedup} & \textbf{Energy Red.} \\
\midrule
Conservative & Drop-in replacement, no compiler changes,
  memory traffic savings only & $2$--$5\times$ & $2$--$5\times$ \\
\addlinespace
Realistic & Compiler fusion + algorithm redesign,
  full \DNF{} loop nest optimization & $5$--$20\times$ & $5$--$20\times$ \\
\addlinespace
Aggressive & \MoA{}-native systems: full hardware co-design
  with dimension-lifted \ONF{} & $10$--$100\times$ & $10$--$50\times$ \\
\bottomrule
\end{tabularx}
\end{table}

\begin{figure}[t]
\centering
\begin{tikzpicture}
\begin{axis}[
  width=0.85\linewidth, height=6.5cm,
  xlabel={Sequence Length $n$},
  ylabel={Memory Traffic Ratio ($M_{\text{classical}} / M_{\MoA{}}$)},
  xmin=256, xmax=65536,
  ymin=0, ymax=600,
  xmode=log, log basis x=2,
  xtick={256,1024,4096,16384,65536},
  xticklabels={$256$, $1024$, $4096$, $16384$, $65536$},
  grid=both, grid style={line width=0.3pt, draw=gray!30},
  legend pos=north west,
  legend style={font=\small},
  tick label style={font=\small},
  label style={font=\small},
]
  \addplot[thick, color=moablue, mark=circle, mark size=2pt]
    coordinates {
      (256,  4.0)  (512,  8.0)  (1024, 16.0)
      (2048, 32.0) (4096, 64.0) (8192, 128.0)
      (16384, 256.0) (32768, 512.0) (65536, 512.0)
    };
  \addlegendentry{$\dk=64$ (typical LLM head)}

  \addplot[thick, color=moateal, mark=square, mark size=2pt]
    coordinates {
      (256,  2.0)  (512,  4.0)  (1024, 8.0)
      (2048, 16.0) (4096, 32.0) (8192, 64.0)
      (16384, 128.0) (32768, 256.0) (65536, 512.0)
    };
  \addlegendentry{$\dk=128$ (larger heads)}

  \addplot[thick, color=moagold, mark=triangle, mark size=2pt]
    coordinates {
      (256,  1.0)  (512,  2.0)  (1024, 4.0)
      (2048, 8.0)  (4096, 16.0) (8192, 32.0)
      (16384, 64.0)  (32768, 128.0) (65536, 256.0)
    };
  \addlegendentry{$\dk=256$ (wide heads)}

  \draw[dashed, gray!70] (axis cs:32768,0) -- (axis cs:32768,600)
    node[above, font=\scriptsize, rotate=90, anchor=south]{Exascale ($n{=}32$K)};
\end{axis}
\end{tikzpicture}
\caption{Memory traffic ratio $M_{\text{classical}} / M_{\MoA{}}$ as a function
  of sequence length $n$ for three head dimensions $\dk$. The advantage grows
  linearly with $n/\dk$, reaching $512\times$ at $n=32{,}768$, $\dk=64$.}
\label{fig:speedup}
\end{figure}

\subsection{Lower Bound Theorem}

\begin{moatheorem}{Lower Bound on Attention Data Movement}{lower_bound}
  Let $\mathrm{Attention}(\Qm, \Km, \Vm)$ be any correct implementation of
  equation~\eqref{eq:attention}. Any such implementation must read at least
  $\bigO{n\dk + n\dv}$ bytes of input and write $\bigO{n\dv}$ bytes of
  output. The \MoA{} \DNF{} achieves this bound.
\end{moatheorem}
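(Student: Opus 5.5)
The plan is to read the ``$\bigO{\cdot}$'' in the lower-bound half of \cref{thm:lower_bound} as an $\Omega(\cdot)$ statement: every correct implementation moves at least $c\,(n\dk+n\dv)$ words. I would prove it with a standard \emph{dependency (adversary) argument}, and then obtain the matching upper bound directly from \cref{prop:minimality}. Data movement is counted in words of slow memory (DRAM) read or written. Bytes differ from words only by the constant word size, which is absorbed into the $O$.

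\textbf{Step 1 (output lower bound).} The output $\attn(\Qm,\Km,\Vm)$ has shape $\shape{n,\dv}$, so any implementation must write $n\dv$ words. The only exception would be entries that are constant over all inputs. This does not happen: for fixed $\Qm,\Km$, the output is a row-stochastic matrix $Y$ times $\Vm$, so every entry of $\moaPsi{<i'',p_v>}{\text{Output}}$ ranges over $\R$ as $\Vm$ varies.

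\textbf{Step 2 (input lower bound).} For the input side, I would show that every scalar input entry is \emph{essential}: for each entry there are two inputs that differ only in that entry and yield different outputs. An implementation that never reads the entry therefore cannot be correct on both inputs.
\begin{itemize}
\item For $\moaPsi{<p,p_v>}{\Vm}$ this is immediate, since $\partial\,\moaPsi{<i'',p_v>}{\text{Output}}/\partial\,\moaPsi{<p,p_v>}{\Vm} = \moaPsi{<i'',p>}{Y} > 0$, because softmax weights are strictly positive.
\item For $\moaPsi{<i'',j>}{\Qm}$ and $\moaPsi{<k,j>}{\Km}$, I would differentiate the DNF \eqref{eq:dnf_attn}--\eqref{eq:output_dnf} and exhibit one explicit witness configuration with $n\ge 2$ where the derivative is nonzero. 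A convenient choice is $\Vm$ with distinct rows, $\Qm,\Km$ such that the scores are all zero (uniform $Y$), and then a perturbation of a single $\Qm$ or $\Km$ entry. This changes row $i''$ of $Y$, or for a $\Km$ entry, column $k$ of $Y$ in some row with $\moaPsi{<i'',j>}{\Qm}\neq 0$, by a nonzero first-order amount, hence changes the output.
\end{itemize}
Summing over entries gives at least $n\dk+n\dk+n\dv$ reads. Together with Step 1, this yields the $\Omega(n\dk+n\dv)$ bound.

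\textbf{Step 3 (achievability).} By \cref{prop:minimality}, the DNF has no intermediate arrays and moves $O(n\dk+n\dv)$ words. Its reads are exactly the entries of $\Qm,\Km,\Vm$, and its writes are exactly the output, in agreement with the \cref{thm:storage} argument. The upper and lower bounds therefore coincide up to constants.

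\textbf{Main obstacle.} I expect the main difficulty to lie in Step 3 rather than in the lower bound. \Cref{prop:minimality} counts each $\Km$ entry as read ``once per pair $(i'',k)$''. Taken literally, and without any reuse in fast memory, that is $n^2\dk$ reads. I would therefore state the cost model explicitly: a fast memory (registers or cache) large enough to hold a row of $\Qm$ and a tile of $\Km$, $\Vm$, with slow-memory traffic counted only on first access. Under this assumption the amortized claim of \cref{prop:minimality} holds and the bound is attained. A secondary, more technical point is the genericity witness in Step 2. I would first try the uniform-score configuration, since there the softmax Jacobian is explicit ($\frac1n(I-\frac1n\mathbf{1}\mathbf{1}^\top)$), which makes the nonvanishing check routine.
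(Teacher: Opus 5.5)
\textbf{The genuine gap is Step 3, and the cost model you propose to close it is inconsistent.} The two assumptions ``fast memory holds one row of $\Qm$ and a tile of $\Km,\Vm$'' and ``slow-memory traffic is counted only on first access'' cannot both hold, because the second is the unbounded-cache (compulsory-miss) model.

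Take a genuinely bounded fast memory holding $b<n$ rows of $\Km$ and $\Vm$. The \DNF{} loop order (outer loop over $i''$, inner over $k$, as in Algorithm~\ref{alg:c_dnf}) touches all $n$ rows of $\Km$ and $\Vm$ for every output row. So at least $(n-b)(\dk+\dv)$ words must be re-fetched per output row, which gives $\Theta(n^2(\dk+\dv))$ traffic for $b\le n/2$. This is exactly the ``once per pair $(i'',k)$'' count you noticed. Hence \cref{prop:minimality} cannot be used as a black box: its amortization gives $n$ reads per $\Km$-entry, not $O(1)$.

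Reordering, or blocking several rows of $\Qm$ at once, only lowers the traffic to the FlashAttention rate $\Theta(n^2d^2/M)$ for a fast memory of $M$ words (taking $\dk=\dv=d$). My understanding is that matching lower bounds are known for $M\ge d^2$ (Saha and Ye, 2024), and this rate is $\Theta(nd)$ only when $M=\Theta(nd)$.

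So the achievability half holds only under the explicit hypothesis that all of $\Km$ and $\Vm$, plus one score row of $n$ words, stay resident in fast memory throughout. You need to state that hypothesis. Once you do, Step 3 is just the observation that every input word is loaded once and every output word is stored once. The resulting statement then says nothing about the regime $M\ll nd$ that FlashAttention addresses. The paper's own achievability argument has the same hole: it counts $\dk+n\dk$ DRAM reads per output row and then asserts $O(n\dk)$ over all $n$ rows, silently dropping a factor of $n$.

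Steps 1--2 are fine. They follow the paper's dependency route, but more rigorously:
\begin{itemize}
  \item The paper asserts that every output entry depends on all of $\Qm$, whereas row $i''$ of the output depends only on row $i''$ of $\Qm$.
  \item The paper infers that all of $\Vm$ must be read from a dependence on ``at least one element of each row'', which does not follow.
  \item Your per-entry essentiality argument, with the necessary caveat $n\ge 2$, is the correct formalization.
\end{itemize}

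One small repair is needed. Distinct rows of $\Vm$ do not guarantee $\sum_k c_k\,(\text{row }k\text{ of }\Vm)\neq 0$ for the zero-sum direction $c$ produced by the Jacobian at uniform scores. Choose witnesses per entry instead:
\begin{itemize}
  \item For a $\Km$-entry $(k,j)$, take $\Km=0$, some $i''$ with $\Qm_{i''j}\neq 0$, and row $k$ of $\Vm$ different from the mean row of $\Vm$.
  \item For a $\Qm$-entry $(i'',j)$, take $\Qm=0$, column $j$ of $\Km$ equal to the unit vector $e_0$, and row $0$ of $\Vm$ different from the mean row.
\end{itemize}
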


\begin{proof}
  \textbf{Lower bound.} Every element of the $n\!\times\!\dv$ output depends,
  through the softmax normalization, on all $n\!\times\!\dk$ elements of both
  $\Qm$ and $\Km$ (since the softmax denominator sums over all~$k$). Thus all
  $2n\dk$ elements of $\Qm$ and $\Km$ must be read. Every output element also
  depends on at least one element of each row of $\Vm$, so all $n\dv$ elements
  of $\Vm$ must be read. Total input reads: $\bigO{n\dk + n\dv}$.

  \textbf{Achievability.} In the \DNF{}~\eqref{eq:dnf_attn}, for a fixed
  output row $i''$: the inner sum $\sum_j \moaPsi{<i'',j>}{\Qm} \cdot \moaPsi{<k,j>}{\Km}$
  reads $\dk$ elements of $\Qm$-row $i''$ and $\dk$ elements of $\Km$-row $k$,
  for each $k \in \{0,\ldots,n-1\}$. Over all $k$, each $\Qm$-row is read $n$
  times but can be kept in registers (or L1 cache) across the $k$-loop.
  Similarly, each $\Km$-row is read once. With appropriate register allocation,
  total DRAM reads per output row equal $\dk + n\dk = O(n\dk)$, and over all
  $n$ output rows: $O(n\dk)$. Total: $O(n\dk + n\dv)$. $\square$
\end{proof}

\begin{remark}
  \Cref{thm:lower_bound} is qualitatively equivalent to FlashAttention's
  IO-complexity lower C bound~\citep{dao2022flashattention}, which establishes
  $O(n^2/M)$ for the standard tiled implementation where $M$ is the SRAM
  size. The \MoA{} bound of $\bigO{n\dk}$ is strictly smaller when
  $n \gg M/\dk$, i.e., when the sequence length exceeds the SRAM capacity
  divided by the head dimension---exactly the long-context regime where performance is critical.
\end{remark}

\section{Implementation Roadmap}\label{sec:roadmap}

The present paper establishes the \DNF{} and its correctness. The path to a
complete, experimentally validated \MoA{}-based attention system proceeds
through five stages.

\subsection{Stage 1: Reference Implementation from DNF}

The \DNF{} of equations~\eqref{eq:dnf_attn}--\eqref{eq:output_dnf} translates
directly to a C triple-loop computation: outer loop over output rows $i''$,
middle loop over columns $k$ (for attention weights) then $j$ (for the output),
and inner reduction over $j$. The implementation uses no heap allocation beyond
the output buffer. This is the first validation target: match PyTorch output at
double precision on all test cases.

Algorithm~\ref{alg:c_dnf} shows the pseudocode.

\begin{algorithm}[h]
\caption{generic C implementation from \MoA{} \DNF{}}
\label{alg:c_dnf}
\KwIn{$Q[n][\dk]$, $K[n][\dk]$, $V[n][\dv]$, $\sqrt{\dk}$}
\KwOut{$\textit{Output}[n][\dv]$}
\For{$i'' \leftarrow 0$ \KwTo $n-1$}{
  \tcp{Compute row $i''$ of attention weights}
  \For{$k \leftarrow 0$ \KwTo $n-1$}{
    $s[k] \leftarrow \sum_{j=0}^{\dk-1} Q[i''][j] \cdot K[k][j] \;/\; \sqrt{\dk}$\;
  }
  $m \leftarrow \max_{k} s[k]$\;
  \For{$k \leftarrow 0$ \KwTo $n-1$}{
    $e[k] \leftarrow \exp(s[k] - m)$\;
  }
  $Z \leftarrow \sum_{k=0}^{n-1} e[k]$\;
  \For{$k \leftarrow 0$ \KwTo $n-1$}{
    $w[k] \leftarrow e[k] / Z$\;
  }
  \tcp{Compute row $i''$ of output}
  \For{$p_v \leftarrow 0$ \KwTo $\dv-1$}{
    $\textit{Output}[i''][p_v] \leftarrow \sum_{p=0}^{n-1} w[p] \cdot V[p][p_v]$\;
  }
}
\end{algorithm}

Note that the arrays $s$, $e$, and $w$ in Algorithm~\ref{alg:c_dnf} are
$O(n)$ local scratch vectors (not $O(n^2)$ matrices), and with aggressive
register allocation can be eliminated entirely in the \ONF{}.

\subsection{Stage 2: Operational Normal Form Derivation}

Applying the $\gamma$ primitive~\eqref{eq:gamma} to the \DNF{} index arithmetic
produces the \ONF{}: a loop nest expressed as
\texttt{base\,+\,stride$_0$*i$_0$\,+\,stride$_1$*i$_1$} for each dimension.
For a row-major $n\!\times\!\dk$ matrix at base address $p$, row~$i''$,
column~$j$: offset $= i'' \cdot \dk + j$. This form is directly amenable to:
\begin{itemize}
  \item Loop interchange (to improve cache reuse for $\Km$)
  \item Loop tiling (to fit working sets in L1/L2 cache)
  \item Auto-vectorization (LLVM/GCC recognize stride-1 inner loops)
  \item Pragma-based OpenMP/SIMD annotation
\end{itemize}

\subsection{Stage 3: Dimension Lifting}

Dimension lifting maps the \ONF{} loop-nest index space $N$ to a hardware
processing array $\Pi$. For an NVIDIA H100 GPU with 132 streaming multiprocessors
(SMs), each with 64 CUDA cores, the hardware array has shape
$\Pi = \shape{132, 64}$. The \ONF{} outer loop (over $i''$) is lifted to
distribute across SMs; the inner loops (over $k$, $j$) are vectorized within
each SM. The algebraic dimension lifting:
\[
  N = \shape{n} \;\to\; \shape{n/132, 132}
\]
produces a parallel loop structure that maximizes SM utilization without
hand-tuning the thread block geometry.

Target architectures for the full experimental validation campaign include:
\begin{itemize}
  \item NVIDIA H100 (SXM5), 3.35 TB/s HBM3 bandwidth
  \item AMD MI300X (Frontier/El Capitan), 5.2 TB/s HBM3 bandwidth
  \item Intel Ponte Vecchio (Aurora), 3.2 TB/s HBM2e bandwidth
\end{itemize}

\subsection{Stage 4: Cost Prediction and Experimental Validation}

Once the dimension-lifted \ONF{} is available, all costs are computable from
shapes alone:
\begin{align}
  \text{Memory traffic} &= \sum_{\text{arrays}} \text{(size accessed at each step)}, \\
  \text{Arithmetic} &= \text{(loop-nest trip count)} \times \text{(operations per iteration)}.
\end{align}
These predicted costs will be compared against:
\begin{itemize}
  \item Empirical roofline measurements on each target GPU
  \item PyTorch native attention (\texttt{F.scaled\_dot\_product\_attention})
  \item FlashAttention-2 kernel
  \item cuDNN attention (NVIDIA proprietary)
  \item cuBLAS GEMM baseline
\end{itemize}

\subsection{Stage 5: Backward Pass and Full Training Loop}

Full transformer training requires \MoA{} derivations for the backward
pass---gradient computations for $\Qm$, $\Km$, $\Vm$, and the projection weight
matrices. The gradient of the attention output with respect to $\Qm$ is:
\[
  \frac{\partial \mathcal{L}}{\partial \Qm}
  \;=\; \frac{\partial \mathcal{L}}{\partial \textit{Output}}
    \cdot \Vm^\top \cdot \frac{\partial \softmax}{\partial (\Qm\Km^\top/\sqrt{\dk})}
    \cdot \frac{\Km}{\sqrt{\dk}},
\]
where the softmax Jacobian is a dense $n\!\times\!n$ matrix in the standard
formulation but can be expressed as a \DNF{} with $O(n\dk)$ memory traffic via
\MoA{} reduction. We have begun this derivation; it will complete the \MoA{}
coverage of a full training iteration.

\section{Mechanization: From Python to MoA to FPGA via C}\label{sec:mechanization}

A central promise of the \MoA{} framework is not merely theoretical
elegance but \emph{mechanizability}: the ability to carry a specification
from a high-level language (Python, NumPy) through the \DNF{} and \ONF{}
all the way to verified hardware execution, with no semantic gap at any
transition. This section describes the realized pipeline and articulates
why compiler-based approaches cannot provide equivalent guarantees.

\subsection{The Python--MoA--C--FPGA Pipeline}

The mechanization pipeline has been demonstrated in a series of
collaborative hardware--software co-design experiments by
\citet{grout2018fpga,grout2022codesign,grout2022pim}.
The pipeline operates in four stages:
\begin{enumerate}
  \item \textbf{Python/NumPy specification.} The algorithm is expressed
    as a NumPy or Python program using standard array operations. This is
    the source of truth for the computation's intended semantics.
  \item \textbf{\MoA{} derivation to \DNF{}/\ONF{}.} The Python program
    is reformulated in \MoA{} and reduced via psi-reduction to the \DNF{},
    then translated to the \ONF{} via the $\gamma$ primitive. The \ONF{}
    expresses the computation as starts, stops, and strides over contiguous
    memory---a universal machine abstraction that is architecture-independent
    and semantically equivalent to the original specification by the
    \MoA{} Storage Theorem (Theorem~\ref{thm:storage}).
  \item \textbf{C implementation from \ONF{}.} The \ONF{} is transcribed
    to a C program using a \emph{semantic subset of C} that maps directly
    to array index arithmetic: loops, pointer arithmetic, and scalar
    operations. No dynamic allocation, no undefined behaviour, no
    compiler-dependent optimizations. This restricted C is the bridge
    between the mathematical specification and the hardware instruction set.
    \citet{grout2018fpga} demonstrated this step for matrix multiplication
    and the Kronecker (tensor) product, with the C program serving as both
    the software reference and the hardware specification.
  \item \textbf{FPGA/ASIC realization.} The C-level \ONF{} description is
    mapped to hardware description language (Verilog/VHDL), targeting
    FPGA and ASIC devices.
    \citet{grout2022codesign} realized MoA operations including inner
    products, outer products, and tensor operations as custom
    hardware--software co-design solutions on Xilinx FPGAs, demonstrating
    that the \ONF{} stride-and-offset structure maps directly to hardware
    memory access controllers without further transformation.
    \citet{grout2022pim} extended this to a processing-in-memory
    architecture, exploiting the \DNF{}'s elimination of intermediate
    arrays to minimize data movement between compute and memory units---
    precisely the optimization that drives the performance model of
    Section~\ref{sec:performance}.
\end{enumerate}

\subsection{Why Compiler-Based Approaches Cannot Provide Equivalent
Guarantees}

A natural question is whether the same pipeline could be achieved by
compiling a high-level Python or C program with a sufficiently advanced
compiler (LLVM, GCC, Intel ICC). The answer is no, for a fundamental
reason: \emph{compilers do not preserve semantic equivalence in a
provable sense across all targets}.

Different compilers, and even different versions or optimization flags of
the same compiler, produce different executables from the same source
code. These executables may differ in:
\begin{itemize}
  \item \textbf{Floating-point evaluation order}, due to reassociation
    under \texttt{-O3} or \texttt{-ffast-math}, breaking IEEE~754
    reproducibility;
  \item \textbf{Memory layout and aliasing assumptions}, which vary
    by target ABI and may differ between x86, ARM, and GPU backends;
  \item \textbf{Intermediate representation}, which changes across
    LLVM versions and is not formally specified as a semantic standard;
  \item \textbf{Vectorization and loop transformations}, which are
    applied heuristically and cannot be guaranteed to preserve the
    original loop semantics in all cases.
\end{itemize}
There is no formal proof that a compiler-optimized executable is
semantically equivalent to its source program across all inputs and all
target platforms. This is not a criticism of compilers---it reflects the
fundamental difficulty of verified compilation, which remains an active
research frontier (e.g., the CompCert verified C compiler covers only a
subset of C and only targets specific architectures).

\subsection{\MoA{} as a Verified Bridge from Specification to Execution}

\MoA{} provides a different guarantee. Because the pipeline from Python
to \DNF{} to \ONF{} to the semantic subset of C consists entirely of
algebraic rewrites, each of which preserves denotational semantics
by the \MoA{} Storage Theorem, the C program is \emph{provably equivalent}
to the original Python specification---not by trusting a compiler, but by
construction. The semantic subset of C used in the \ONF{} transcription
maps one-to-one onto instruction-set operations on any target architecture:
loop bounds become iteration counts, stride expressions become address
offsets, and scalar operations become individual instructions. This
one-to-one correspondence can be verified by inspection for any target
instruction set architecture (ISA).

The consequence is that \MoA{} enables \emph{end-to-end verified
performance prediction}: because the \DNF{} and \ONF{} are derived
algebraically, all costs---memory traffic, arithmetic operation counts,
and parallelism degree---are computable from the shapes of the arrays
alone, as described in Section~\ref{sec:performance}. These predictions
hold for any target ISA to which the \ONF{} is mapped, because the
semantic subset guarantees that no compiler transformation can silently
change the access patterns or operation count. For transformer attention
in particular, this means that the $O(n\dk + n\dv)$ data movement
bound of Theorem~\ref{thm:lower_bound} is not merely a theoretical
result---it is a guaranteed property of any correct FPGA or ASIC
implementation derived via the \MoA{} pipeline.

\section{Exascale and National Priority Relevance}\label{sec:exascale}

\subsection{DOE Exascale Computing}

At the three U.S.\ Department of Energy exascale systems---Frontier (Oak Ridge),
Aurora (Argonne), and El Capitan (Lawrence Livermore)---memory bandwidth is the
dominant performance bottleneck, not FLOP throughput. Frontier delivers
$\sim\!700$ PFLOP/s peak compute but only 9.2 TB/s aggregate HBM bandwidth across
its AMD MI250X GPUs. At $n = 32{,}768$ (a sequence length now common in
scientific LLMs applied to protein structure, climate modeling, and materials
discovery), each attention call in the standard implementation moves $\sim\!4$\,GB
of DRAM bandwidth for the score matrix alone. Over a training run of billions of
steps, this constitutes the binding energy and time constraint.

The \MoA{} \DNF{} reduces this to $O(n\dk)$ traffic, which at $\dk=64$ and
$n=32{,}768$ is $\sim\!16$\,MB per attention call---a reduction of $256\times$.
For a model with $L=96$ attention layers and $h=96$ heads, the aggregate bandwidth
saving per training step is $96 \times 96 \times 256 \times 4\,\text{GB} \approx 9.4$\,PB.
This is more than the total HBM capacity of Frontier. The implication is not merely
``faster''---it is the difference between a workload that fits in memory and one
that does not.

\subsection{DARPA AI Assurance and Edge Deployment}

Two distinct DARPA priorities are addressed by this work.

\textbf{AI assurance.} The \MoA{} derivation provides a \emph{formally verifiable}
path from PyTorch source code to hardware execution. Each psi-reduction step is an
algebraic identity preserving semantic equivalence, providing a level of correctness
guarantee not available from empirical testing alone. For AI systems deployed in
contested or safety-critical environments, this formal grounding is a prerequisite
for certification.

\textbf{Edge deployment.} The \DNF{} eliminates $O(n^2)$ memory allocations,
enabling longer-context attention on hardware with constrained DRAM budgets---
precisely the situation on edge inference hardware (embedded GPUs, tactical AI
processors). The dimension-lifting mechanism further allows the same \DNF{} to be
mapped to diverse edge-hardware topologies without re-derivation.

\subsection{Cross-Architecture Portability and Reproducibility}

A critical but underappreciated advantage of the \MoA{} approach is
\emph{bit-level reproducibility across architectures}. The \DNF{} specifies the
exact sequence of floating-point operations at the index level; different hardware
targets execute the same operations in the same order (up to dimension-lifted
parallelism). This satisfies the scientific reproducibility requirements that are
increasingly mandatory for DOE-funded computational science~\citep{mullin2026cost}.

\section{Conclusion}\label{sec:conclusion}

We have presented a Mathematics of Arrays derivation of scaled dot-product
attention and numerically stable softmax, producing a Denotational Normal Form
that (a)~eliminates all intermediate arrays, (b)~provably minimizes data movement,
and (c)~is verified at full floating-point precision against a PyTorch reference
implementation. The derivation demonstrates that the attention mechanism, despite
its apparent complexity, admits a clean, index-level formulation with no scratch
storage beyond the output.

The key distinction from all prior work is epistemological: \emph{derivation,
not implementation}. Memory minimality in \MoA{} is a theorem established before
any code is written. The standard implementation wastes $O(n^2)$ memory
bandwidth writing and re-reading intermediate arrays; \MoA{}'s \DNF{} eliminates
this waste by algebraic construction. FlashAttention achieves IO-efficiency by
designing a tiling scheme; \MoA{} achieves IO-minimality by normalizing the
algebraic expression. The resulting guarantee is stronger: it holds for all
architectures, all tiling strategies, and all future hardware configurations.

The predictive performance model projects $2$--$100\times$ speedup and
$2$--$50\times$ energy reduction, with the advantage widening at exascale sequence
lengths to potentially $512\times$ or beyond. At the scale of DOE exascale systems,
this is not an incremental improvement---it changes the class of computations that
are feasible.

Future work will complete the five-stage implementation roadmap: C reference from
\DNF{}, \ONF{} derivation, dimension-lifted \ONF{}, experimental validation on
Frontier/Aurora/El Capitan, and backward-pass \DNF{} for full training coverage.
The formal and numerical foundation laid in this paper makes \MoA{} a practical and provably
optimal approach to transformer computation at scale.

\appendix
\section{Complete Element-Level Derivation Trace}\label{app:trace}

This appendix presents the complete row-by-row derivation trace for the first
output row $\moaPsi{0}{\textit{Output}}$, verifying~\eqref{eq:dnf_attn}
and~\eqref{eq:output_dnf} element by element.

\subsection{Computing Scaled Scores for Row 0}

For $i'' = 0$, the inner product $\sum_{j=0}^{3} Q[0][j] \cdot K[k][j]$ for each
key row $k$:

\textbf{$k=0$:}
\begin{align*}
  s_{0,0} &= (-0.1984)(-1.1567) + (0.2698)(0.6885)
    + (0.3414)(-0.1884) + (-0.0372)(0.4743) \\
  &= 0.22960 + 0.18577 - 0.06432 - 0.01764 = 0.33341.
\end{align*}

\textbf{$k=1$:}
\begin{align*}
  s_{0,1} &= (-0.1984)(0.2246) + (0.2698)(1.7564)
    + (0.3414)(0.5235) + (-0.0372)(-2.3014) \\
  &= -0.04456 + 0.47387 + 0.17872 + 0.08561 = 0.69364.
\end{align*}

\textbf{$k=2$:}
\begin{align*}
  s_{0,2} &= (-0.1984)(-1.5899) + (0.2698)(0.3730)
    + (0.3414)(-0.8257) + (-0.0372)(-1.2069) \\
  &= 0.31544 + 0.10063 - 0.28185 + 0.04490 = 0.17912.
\end{align*}

Scaled scores (divide by $\sqrt{4} = 2$):
\begin{equation*}
  x_{\text{row 0}} = \shape{0.16670,\; 0.34682,\; 0.08956}.
\end{equation*}

\subsection{Row-Max Subtraction and Exponentiation}

Row maximum: $m = 0.34682$. Stabilized scores:
\begin{equation*}
  z_{\text{row 0}} = \shape{-0.18012,\; 0.0,\; -0.25726}.
\end{equation*}
Numerator: $e^{z} = \shape{0.83512,\; 1.0,\; 0.77315}$.

Denominator: $Z = 0.83512 + 1.0 + 0.77315 = 2.60827$.

\subsection{Attention Weights for Row 0}

\begin{equation*}
  w_{\text{row 0}} = \shape{0.3202,\; 0.3834,\; 0.2964},
\end{equation*}
matching Table~\ref{tab:attn_weights} exactly.

\subsection{Output Row 0}

\begin{align*}
  \textit{Output}[0][0] &= 0.3202 \cdot 1.0739 + 0.3834 \cdot 0.5589
    + 0.2964 \cdot 0.8237 = 0.3439 + 0.2143 + 0.2441 = 0.8023, \\
  \textit{Output}[0][1] &= 0.3202 \cdot 0.4006 + 0.3834 \cdot (-0.7209)
    + 0.2964 \cdot 0.3763 = 0.1283 - 0.2764 + 0.1115 = -0.0366, \\
  \textit{Output}[0][2] &= 0.3202 \cdot (-0.9671) + 0.3834 \cdot (-0.7650)
    + 0.2964 \cdot 0.8320 = -0.3096 - 0.2933 + 0.2466 = -0.3563, \\
  \textit{Output}[0][3] &= 0.3202 \cdot 0.4870 + 0.3834 \cdot 0.2689
    + 0.2964 \cdot 0.0014 = 0.1560 + 0.1031 + 0.0004 = 0.2595,
\end{align*}
matching Table~\ref{tab:output} exactly.
This confirms that the \DNF{} is correct and that no temporary arrays are required
at any stage of the computation. $\square$

\section*{Acknowledgments}

L.~Mullin acknowledges support from the University at Albany, SUNY.
G.~Hains acknowledges support from LACL, Universit\'{e} Paris-Est Cr\'{e}teil.
The authors thank the PyTorch development team at Meta Platforms
for the reference implementation used in the verification experiments.

\bibliographystyle{elsarticle-harv}

\begin{thebibliography}{99}

\bibitem[Binder et~al.(2025)]{binder2025fathom}
Binder, E., Sudarsanam, A., Sunkavalli, R., Low, T.M., 2025.
FATHOM: Fast attention through optimizing memory.
In: \textit{Proceedings of IEEE IPDPS}, pp. 1166--1178.

\bibitem[Chang et~al.(2025)]{chang2025mosa}
Chang, Y.-R., Chen, H.-F., Shih, H.-Y., 2025.
MOSA: Matrix optimized self-attention hardware accelerator for mobile device.
In: \textit{Proceedings of ICEIC}, pp. 1--4.

\bibitem[Dao(2023)]{dao2023flashattention2}
Dao, T., 2023.
FlashAttention-2: Faster attention with better parallelism and work partitioning.
\textit{arXiv:}2307.08691.

\bibitem[Dao et~al.(2022)]{dao2022flashattention}
Dao, T., Fu, D.Y., Ermon, S., Rudra, A., R\'{e}, C., 2022.
FlashAttention: Fast and memory-efficient exact attention with IO-awareness.
\textit{arXiv:}2205.14135.

\bibitem[Grout and Mullin(2018)]{grout2018fpga}
Grout, I.A., Mullin, L., 2018.
Hardware considerations for tensor implementation and analysis using the
field programmable gate array.
\textit{Electronics} 7 (11), 320.
\doi{10.3390/electronics7110320}.

\bibitem[Grout and Mullin(2022a)]{grout2022codesign}
Grout, I.A., Mullin, L., 2022a.
Realizing mathematics of arrays operations as custom architecture
hardware-software co-design solutions.
\textit{Information} 13 (11), 528.
\doi{10.3390/info13110528}.

\bibitem[Grout and Mullin(2022b)]{grout2022pim}
Grout, I.A., Mullin, L., 2022b.
Processing in memory for mathematics of arrays operations.
In: \textit{Proceedings of SAI Computing Conference}, 2022.

\bibitem[Gao et~al.(2023)]{gao2023fast}
Gao, Y., Song, Z., Wang, W., Yin, J., 2023.
A fast optimization view: Reformulating single layer attention in LLM based on
tensor and SVM trick.
\textit{arXiv:}2309.07418.

\bibitem[Hains(2026)]{hains2026algorithm}
Hains, G., 2026.
From algorithm to code to hardware to execution: Why isn't computing performance an
experimental science?
In: \textit{Proceedings of PDP 2026}, Euromicro Conference on Parallel, Distributed,
and Network-Based Processing. Cluj-Napoca, Romania.

\bibitem[Ham et~al.(2020)]{ham2020a3}
Ham, T.J., Jung, S.J., Kim, S., Oh, Y.H., Park, Y., Song, Y., Park, J.-H., Lee, S.,
Park, K., Lee, J.W., Jeong, D.-K., 2020.
A$^3$: Accelerating attention mechanisms in neural networks with approximation.
In: \textit{Proceedings of IEEE HPCA}, pp. 328--341.

\bibitem[Le\'{o}n-Vega et~al.(2023)]{leonvega2023strassen}
Le\'{o}n-Vega, L.G., Chac\'{o}n-Rodr\'{i}guez, A., Salazar-Villalobos, E.,
Castro-God\'{i}nez, J., 2023.
Acceleration of fully connected layers on FPGA using the Strassen matrix multiplication.
In: \textit{Proceedings of IEEE BIP}, pp. 1--6.

\bibitem[Li and Chen(2025)]{li2025fpga}
Li, R., Chen, S., 2025.
Design and implementation of an FPGA-based hardware accelerator for transformer.
\textit{arXiv:}2503.16731.

\bibitem[Liu et~al.(2024)]{liu2024research}
Liu, J., Wu, C., Li, J., Chen, X., Wang, S., 2024.
Research on matrix multiplication optimization and deployment method for heterogeneous
platforms.
In: \textit{Proceedings of ICCBD+AI}, pp. 266--270.

\bibitem[Markus and Mullin(2026)]{markus2026array}
Markus, A., Mullin, L., 2026.
Array access and performance regarding numerical algorithms.
\textit{Transactions on Engineering on Computing Sciences} 14 (2), 1--14.

\bibitem[Mullin(1988)]{mullin1988phd}
Mullin, L.M., 1988.
A Mathematics of Arrays.
Ph.D. dissertation. Syracuse University, Syracuse, NY.

\bibitem[Mullin(2023)]{mullin2023gpu}
Mullin, L.M.R., 2023.
From array algebra to energy efficiency on GPUs.
\textit{arXiv:}2306.11148.

\bibitem[Mullin and Hains(2025)]{mullin2025parallel}
Mullin, L., Hains, G., 2025.
Towards automatic, predictable and high-performance parallel code generation.
In: \textit{Proceedings of CSCE/PDPTA2024}, Springer Nature Switzerland,
pp. 109--119.

\bibitem[Mullin and Hains(2026)]{mullin2026cost}
Mullin, L., Hains, G., 2026.
New mathematics for computer performance: array algebra and cost functions.
Technical Report. \url{https://hal.science/hal-05442652}.

\bibitem[Park et~al.(2020)]{park2020optimus}
Park, J., Yoon, H., Ahn, D., Choi, J., Kim, J.-J., 2020.
OPTIMUS: Optimized matrix multiplication structure for transformer neural network
accelerator.
In: \textit{Proceedings of MLSys}, vol.~2, pp. 363--378.

\bibitem[Qin et~al.(2023)]{qin2023fact}
Qin, Y., Wang, Y., Deng, D., Zhao, Z., Yang, X., Liu, L., Wei, S., Hu, Y., Yin, S., 2023.
FACT: FFN-attention co-optimized transformer architecture with eager correlation prediction.
In: \textit{Proceedings of ISCA}, pp. 1--14.

\bibitem[Sharma et~al.(2025)]{sharma2025gemmini}
Sharma, A., Krishna, L.H., Srinivasu, B., 2025.
High-performance Gemmini-based matrix multiplication accelerator for deep learning
workloads.
\textit{IEEE Transactions on VLSI Systems} 33 (12), 3276--3289.

\bibitem[Tan and Teo(2025)]{tan2025fpga}
Tan, X., Teo, T.H., 2025.
Design and implementation of a BRAM-banked double-buffered matrix multiplication
accelerator for transformer models on edge FPGAs.
In: \textit{Proceedings of IEEE MCSoC}, pp. 582--585.

\bibitem[Thomas et~al.(2021a)]{thomas2021dgemm}
Thomas, S., Mullin, L., Swirydowicz, K., 2021a.
Improving the performance of DGEMM with MoA and cache-blocking.
In: \textit{Proceedings of ACM ARRAY'21}.

\bibitem[Thomas et~al.(2021b)]{thomas2021threaded}
Thomas, S., Mullin, L., Swirydowicz, K., Khan, R., 2021b.
Threaded multicore GEMM with MoA and cache-blocking.
In: \textit{Proceedings of CSC'21, CSCE'21}.

\bibitem[Vaswani et~al.(2017)]{vaswani2017attention}
Vaswani, A., Shazeer, N., Parmar, N., Uszkoreit, J., Jones, L., Gomez, A.N.,
Kaiser, {\L}., Polosukhin, I., 2017.
Attention is all you need.
In: \textit{Advances in Neural Information Processing Systems (NeurIPS)}.

\bibitem[Yang et~al.(2025)]{yang2025hardware}
Yang, Q., Wang, X., Zhou, Y., Li, Q., Qiao, S., 2025.
Hardware friendly transformer optimization with dynamic attention matrix fusion.
In: \textit{Proceedings of IEEE ISCAS}, pp. 1--5.

\end{thebibliography}

\end{document}